\newtheorem{theorem}{Theorem}
\newtheorem{lemma}{Lemma}
\newtheorem{corollary}{Corollary}
\begin{document}

\title{HI-GVF: Shared Control based on Human-Influenced Guiding Vector Fields for Human-multi-robot Cooperation}

\author{Pengming Zhu, Zongtan Zhou, Weijia Yao, Wei Dai, Zhiwen Zeng, and Huimin Lu
\thanks{This work was supported by the National Science Foundation of China under Grant 62203460, U22A2059, Major Project of the Natural Science Foundation of Hunan Province (No. 2021JC0004).}
\thanks{Pengming Zhu, Zongtan Zhou, Wei Dai, Zhiwen Zeng, Huimin Lu are with the College of Intelligence Science and Technology, National University of Defense Technology, Changsha 410073, China (e-mail: zhupengming@nudt.edu.cn; narcz@163.com; weidai\_nudt@foxmail.com; zengzhiwen@nudt.edu.cn; lhmnew@nudt.edu.cn).}
\thanks{Weijia Yao is with School of Robotics, Hunan University, Changsha 410082, China (e-mail: wjyao@hnu.edu.cn).}}

%

\maketitle

\begin{abstract}	
Human-multi-robot shared control leverages human decision-making and robotic autonomy to enhance human-robot collaboration. While widely studied, existing systems often adopt
a leader-follower model, limiting robot autonomy to some extent.
Besides, a human is required to directly participate in the motion control of robots through teleoperation, which significantly burdens the operator.
To alleviate these two issues, we propose a layered shared control computing framework using human-influenced guiding vector fields (HI-GVF) for human-robot collaboration. 
HI-GVF guides the multi-robot system along a desired path specified by the human. Then, an intention field is designed to merge the human and robot intentions, accelerating
the propagation of the human intention within the multi-robot system.
Moreover, we give the stability analysis of the proposed model and use collision avoidance based on safety barrier certificates to fine-tune the velocity. Eventually, considering the firefighting task as an example scenario, we conduct simulations and experiments using multiple human-robot interfaces (brain-computer interface, myoelectric wristband, eye-tracking), and the results demonstrate that our proposed approach boosts the effectiveness and performance of the task.
\end{abstract}

\begin{IEEEkeywords}
Human-robot interaction, multi-robot systems cooperation, shared control.
\end{IEEEkeywords}

\section{Introduction}

\IEEEPARstart{M}{ulti-robot} systems (MRS) have attracted much attention from the research community, and they are widely used in tasks such as monitoring \cite{smith2017multi}, firefighting \cite{saez2010multi}, search and rescue \cite{liu2013robotic}, source seeking \cite{fabbiano2018distributed}, etc. 
Despite that considerable efforts have been made by researchers to enhance the autonomy of distributed multi-robot systems, there are still several challenges due to, e.g., perception inaccuracies, communication delays, and limited decision-making capabilities of robots \cite{dai2019task}. 
Therefore, practical multi-robot systems often require human operators' supervision and intervention, which can help assess and find potential targets, identify threats, react to emergencies and especially, provide intelligent guidance when planning algorithms fail. 
In many cases, human partners can operate remotely using the shared control technology to influence multi-robot systems, which can perform collaborative tasks with guaranteed safety autonomously when human influence is absent and thus alleviate the physical and mental workload on humans.

Shared control methods used in remote operating systems are usually based on mixed inputs \cite{saleh2013shared,krzysiak2022information} or haptic feedback \cite{zhu2023a,sieber2015multi}.
There are various human-multi-robot interaction systems.
For instance, a task-dependent graphical user multi-touch interface \cite{ayanian2014controlling,patel2019mixed}, a gesture-based interface \cite{podevijn2014dgesturing,suresh2020human} of a mobile handheld device \cite{patel2019mixed}, and a multi-modal interface with gesture, voice, and vision \cite{gromov2016wearable}.
As noted in several reviews \cite{chen2014human,kolling2015human}, the effectiveness of such a team is primarily
influenced by the type of information operators receive from the robots and the tools they have available to exert control.
In some cases, multiple modalities are necessary for the expression of human intentions. Assume a scenario similar to Fig. \ref{background}, where an operator and a group of robots collaborate on a mission together, where the robots perform tasks in the rubble, and the operator provides a more comprehensive view via image transmission from the UAV. Humans hold remote operating equipment/tools and, therefore, need other methods to send commands to robots, such as covert hand gestures and smart helmets that can capture EEG signals as well as eye movement information. In this way, the operator can provide the robots with more information about the environment, where the target is (e.g., threats that need to be removed, trapped persons), and, if necessary, the operator even needs to be given a certain trajectory that will lead the robot to a certain area.

\begin{figure}
	\centering
	\includegraphics[width=\linewidth]{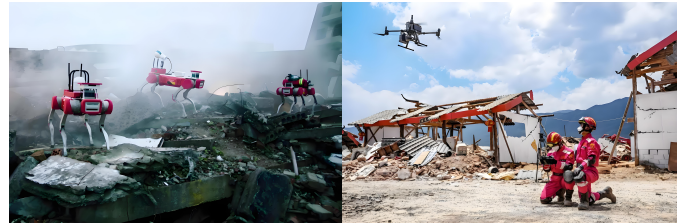}
	\caption{Human and multi-robot collaboration.}
	\label{background}
\end{figure}

Based on this motivation, this paper proposes a layered shared control computing framework based on \textbf{h}uman-\textbf{i}nfluenced \textbf{g}uiding \textbf{v}ector \textbf{f}ields (HI-GVF) for multi-robot systems, which is shown in Fig. \ref{framework}.
The blocks of different colors represent different research contents.
Robots acquire different world models by sensors for calculation and decision-making, as shown in Fig.~\ref{framework} (gray block). 

Human operators acquire the world models not based on direct observations of the environment but on a virtual environment based on remote images.
This method gives humans a broader view of the situation, which is useful for integrated planning.
The operator can then give human intention by plotting a trajectory for robots or selecting a priority target area, as shown in Fig. \ref{framework} (green block).
Robot intention and human intention are fused through a shared control framework, as shown in Fig. \ref{framework} (blue block).
The components are explained in detail in subsequent sections.

The main contributions of this paper are as follows:
\begin{itemize}
	\item [1)]We propose a shared control framework based on HI-GVF for human-multi-robot cooperation, without requiring humans to intervene at every moment, and the operator can intuitively give a certain desired path as an input to guide robots.
	
	\item [2)]We design a layered shared control framework, where an intention field is utilized to merge the robot intention generated by the robot local controller and the human intention generated by HI-GVF in the upper layer, and a policy-blending model is used to fuse various VFs in the lower layer.
	\item [3)]We give a stability analysis of the proposed method and use an obstacle avoidance method based on optimization in a minimally invasive way, which ensures stability in obstacle avoidance for the whole formation.
	
	\item [4)]We conduct simulations and physical experiments using multiple human-robot interfaces (brain-computer interface, myoelectric wristband, eye-tracking) based on self-designed human-multi-robot interaction system to verify
	the effectiveness of the proposed method.
\end{itemize}

\begin{figure*}
	\centering
	\includegraphics[width=0.8\linewidth]{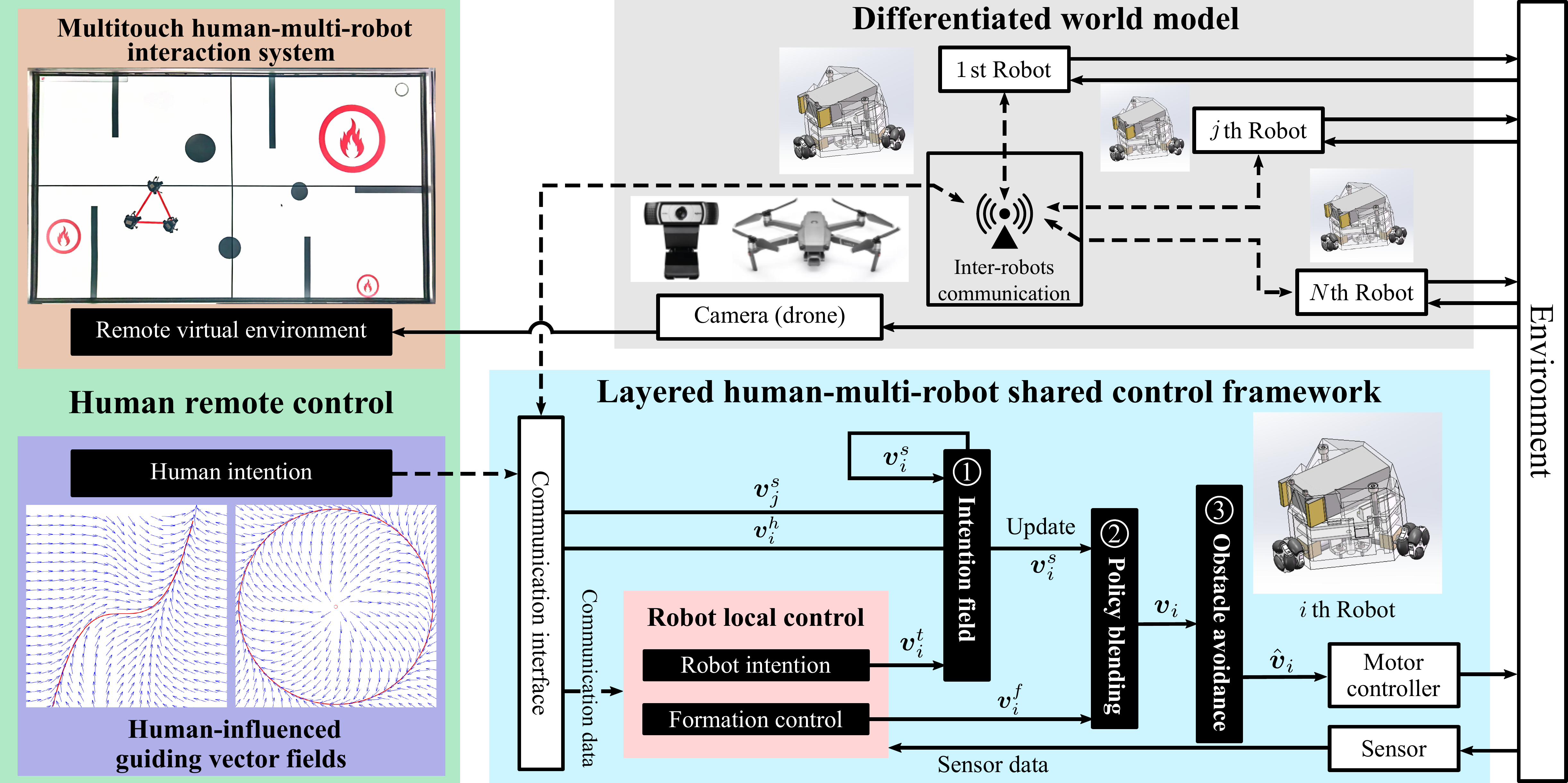}
	\caption{The framework of the proposed human-influenced guiding vector field shared control method. The human operator and each robot are independent units, ultimately acting on the MRS through shared control. Different colored blocks are used to distinguish the different research contents in this figure.}
	\label{framework}
\end{figure*}

\section{preliminary: guiding vector field}
\label{sec2}

	In this section, we introduce preliminary knowledge about guiding vector field.
	Then, we utilize HI-GVF to describe human intention, which is used to influence the collective behavior of the multi-robot system. The HI-GVF is derived based on a desired path given by the human operator, who expects a robot to follow the desired path for a specified period of time according to the human intention.

\subsection{GVF}
To derive the guiding vector field, we first suppose that the desired path can be described by an implicit function:
\begin{equation}
\mathcal{P}=\{\xi \in \mathbb{R}^2:\phi(\xi)=0 \},
\label{eq_desired_path}
\end{equation}
where $\phi:\mathbb{R}^2 \to \mathbb{R}$ is twice continuously differentiable, and $\xi=(x,y) \in \mathbb{R}^2$ represents the two-dimensional position.
Then a guiding vector field proposed in \cite{kapitanyuk2017guiding} to solve the path-following problem is defined by:
\begin{equation}
\mathcal{X}(\xi)=\gamma E \nabla \phi(\xi)-k \phi(\xi) \nabla \phi(\xi),
\label{vector field_phi}
\end{equation}
where $E$ is the $90^{\circ}$ rotation matrix 
$ \left[ \begin{smallmatrix}
0 & -1\\
1 & 0
\end{smallmatrix} \right]$, $\gamma \in \{1,-1\}$ is used to specify the propagation direction along the desired path, $k$ is an adjustable parameter. 
The first term of \eqref{vector field_phi} is the \emph{tangential component}, enabling the robot to move along the desired path, while the second term of \eqref{vector field_phi} is the \emph{orthogonal component}, assisting the robot to move closer to the desired path.
	Therefore, intuitively, the guiding vector field can guide the robot to move toward and along the desired path at the same time.


\begin{figure}[h!]
	\centering
	\includegraphics[width=0.93\linewidth]{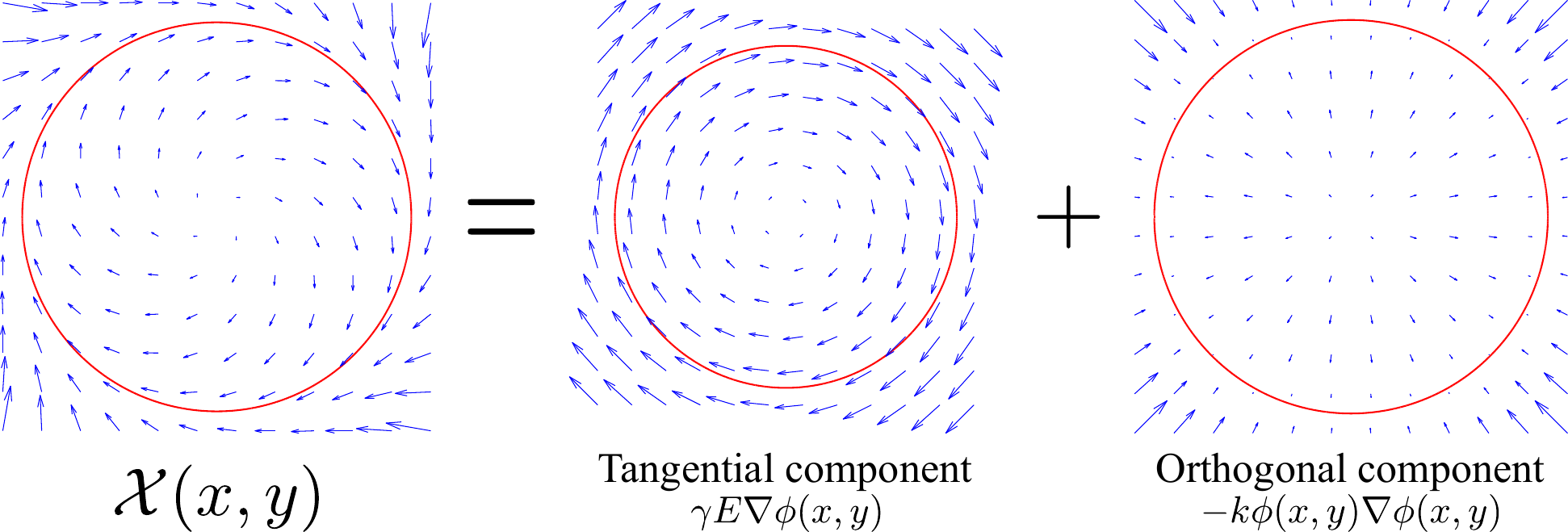}
	\caption{The visualization of a guiding vector field. The corresponding desired path (red curve) is the circle described by choosing $\phi(x,y) = x^2 + y^2 - R^2$, where $R$ is the circle radius, in (1). Each blue arrow represents a vector of the corresponding vector field at the position}.
	\label{VF}
\end{figure}


\subsection{HI-GVF}
When the desired path is occluded by obstacles, a robot needs to deviate from the desired path to avoid obstacles, and then return to the desired path.
To address this problem, we design a smooth composite vector field, consisting of vector fields generated by the desired path and obstacles and integrated with two bump functions \cite{yao2022guiding}.

\subsubsection{Reactive boundary and repulsive boundary}
Regardless of the shapes (circle or bar) of the obstacles, we design some boundaries to enclose each obstacle, such that avoiding collisions with the obstacles is simplified as avoiding collisions with the boundaries, as shown in Fig. \ref{QR}(a).
The reactive boundary $\mathcal{R}$ and the repulsive boundary $\mathcal{Q}$ around the circle obstacle or bar barrier are defined as follows:
\begin{equation}
\mathcal{R}_i=\{\xi \in \mathbb{R}^2: \varphi_i(\xi)=0 \},
\label{eq_Reactive}
\end{equation}
\begin{equation}
\mathcal{Q}_i=\{\xi \in \mathbb{R}^2: \varphi_i(\xi)=c_i \},
\label{Q-smooth}
\end{equation}
for $i \in \mathcal{I}$, where $\mathcal{I}$ represents the set of a finite number of obstacles, $\varphi_i:\mathbb{R}^2\to\mathbb{R}$ is a twice continuously differentiable function, and $c_i \neq 0$ is a given constant.
The reactive boundary $\mathcal{R}$ divides the workspace into the interior, denoted by $^\text{in}\mathcal{R}$, and the exterior, denoted by $^\text{ex}\mathcal{R}$, as shown in Fig. \ref{QR}(b).
For convenience, $^\text{in}\mathcal{R}$ is also called the reactive area, and $^\text{ex}\mathcal{R}$ is the non-reactive area.
When the robot enters the reactive area, it is able to detect and react to the obstacle.
Similarly, we define $^\text{in}\mathcal{Q}$ and $^\text{ex}\mathcal{Q}$ to represent the repulsive area and the non-repulsive area, respectively, as shown in Fig. \ref{QR}(c).
The repulsive area is a region where the robot is forbidden to enter; otherwise collision with the obstacle would happen.
Besides, we use $\overline{(\cdot)}$ to represent the area containing the corresponding boundary (i.e., the closure of a given set).
For instance, $\overline{^\text{in}\mathcal{R}}$ denotes the reactive area $^\text{in}\mathcal{R}$ containing the reactive boundary $\mathcal{R}$.
\begin{figure}[h!]
	\centering
	\subfloat[Desired path $\mathcal{P}$, and the obstacle (black disk) with reactive boundary $\mathcal{R}$ and repulsive boundary $\mathcal{Q}$]{\includegraphics[width=0.9\linewidth]{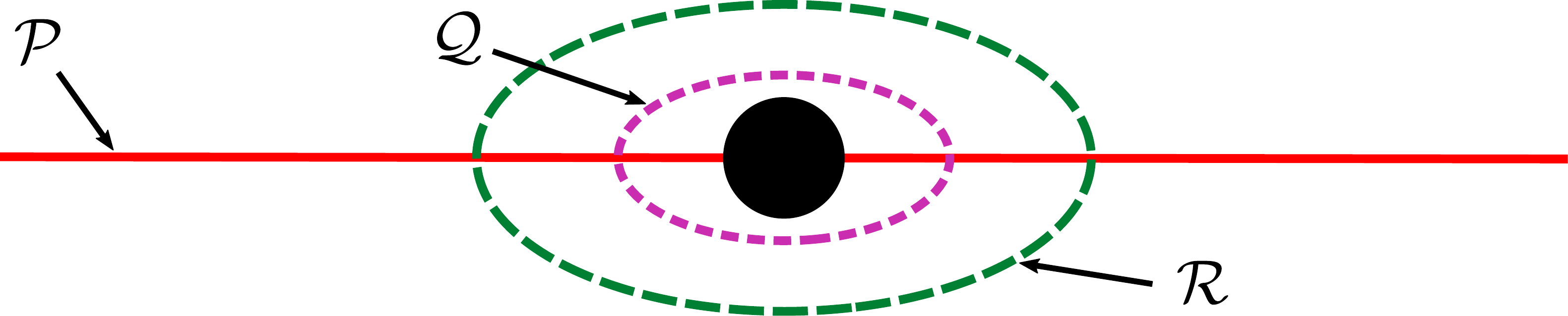}}\\
	\label{QR_a}
	\subfloat[Reactive area $^\text{in}\mathcal{R}$ (green elliptic disk) and non-reactive area $^\text{ex}\mathcal{R}$]{\includegraphics[width=0.45\linewidth]{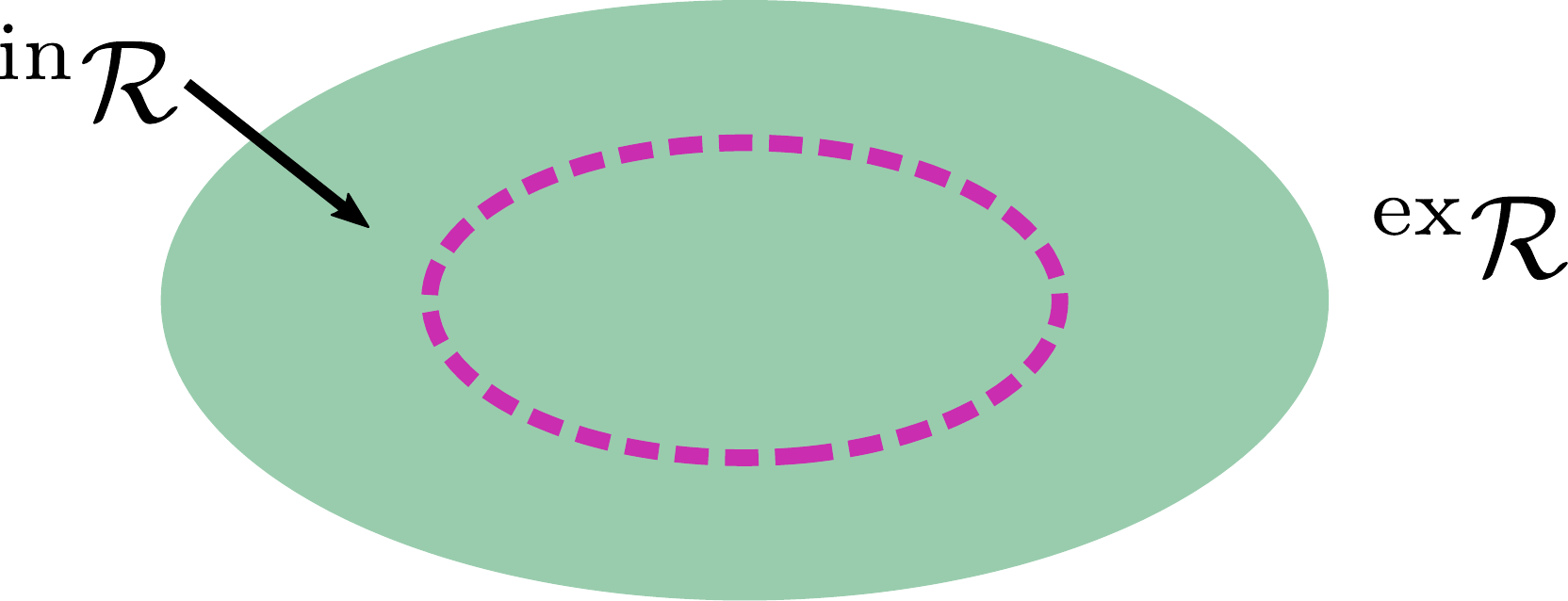}}
	\label{QR_b}
	\hspace{5mm}
	\subfloat[Repulsive area $^\text{in}\mathcal{Q}$ (pink elliptic disk) and non-repulsive area $^\text{ex}\mathcal{Q}$]{\includegraphics[width=0.45\linewidth]{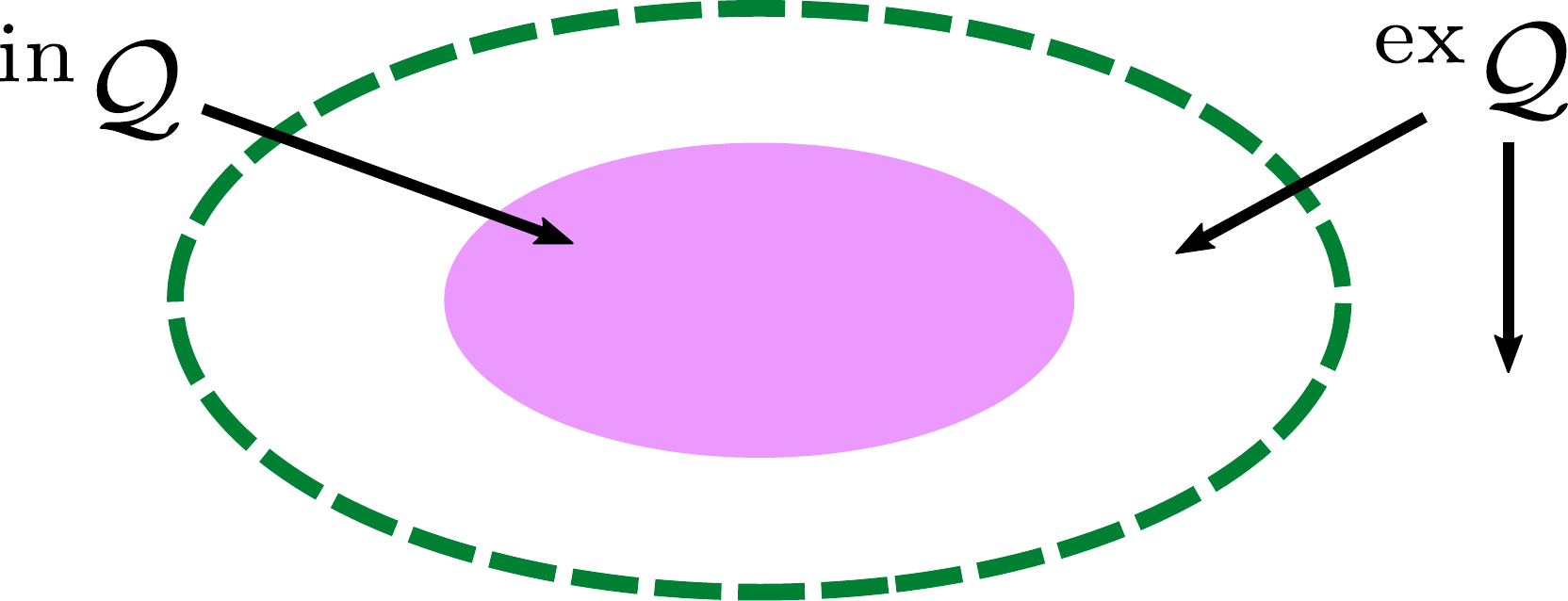}}
	\label{QR_c}
	\caption{Illustration of the reactive and repulsive boundary and area of one obstacle.} 
	\label{QR}
\end{figure}


	For the desired path in \eqref{eq_desired_path} and the reactive boundary in \eqref{eq_Reactive}, following \eqref{vector field_phi}, we can obtain the corresponding vector fields $\mathcal{X}_\mathcal{P},\mathcal{X}_{\mathcal{R}_i}$ with the constants $\gamma_p,\gamma_i, k_p,k_{r_i}$ sharing the same meanings as those in \eqref{vector field_phi}.

\subsubsection{Zero-in and zero-out bump function}
To smooth the motion of the robot as it passes through the reactive area where a robot is able to sense the obstacle, we use two smooth bump functions to generate a composite vector field so that the velocity does not change abruptly near the reactive boundary.

For a reactive boundary $\mathcal{R}$ and a repulsive boundary $\mathcal{Q}$, we can define smooth bump function $\sqcap_{\mathcal{R}_i}$, $\sqcup_{\mathcal{Q}_i}$ as follows:

\begin{equation}
\sqcap_{\mathcal{R}_i}(\xi)=\left\{\begin{array}{ll}
1 & \xi \in \overline{^\text{in}\mathcal{Q}_i} \\
Z_i(\xi) & \xi \in  {^\text{ex}\mathcal{Q}_i} \cap {^\text{in}\mathcal{R}_i} \\
0 & \xi \in \overline{^\text{ex}\mathcal{R}_i}
\end{array}\right.,
\label{zero-out function}
\end{equation}

\begin{equation}
\sqcup_{\mathcal{Q}_i}(\xi)=\left\{\begin{array}{ll}
0 & \xi \in \overline{^\text{in}\mathcal{Q}_i} \\
S_i(\xi) & \xi \in  {^\text{ex}\mathcal{Q}_i} \cap {^\text{in}\mathcal{R}_i} \\
1 & \xi \in \overline{^\text{ex}\mathcal{R}_i}
\end{array}\right.,
\label{zero-in function}
\end{equation}
where $Z_i:{^\text{ex}\mathcal{Q}_i} \cap {^\text{in}\mathcal{R}_i} \rightarrow(0,1)$ and $S_i:{^\text{ex}\mathcal{Q}_i} \cap {^\text{in}\mathcal{R}_i} \rightarrow(0,1)$ are smooth functions. 

\begin{figure}[h!]
	\centering
	\subfloat[zero-out bump function $\sqcap_{\mathcal{R}_i}$]{\includegraphics[width=0.49\linewidth]{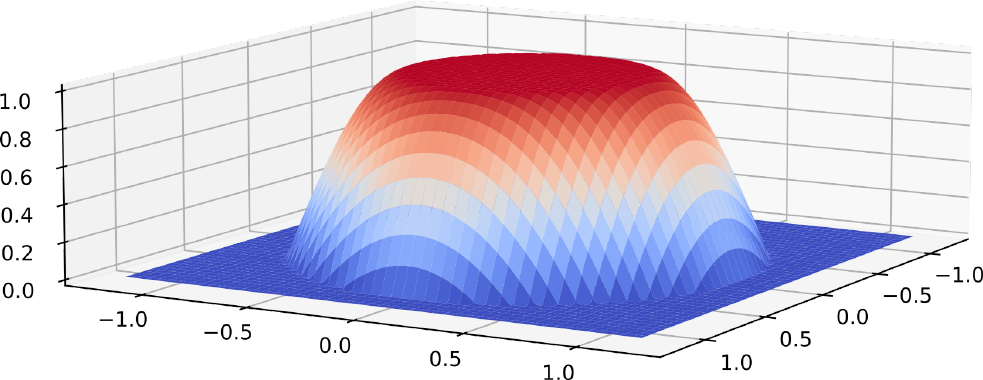}}
	\label{zero-out}
	\subfloat[zero-in bump function $\sqcup_{\mathcal{Q}_i}$]{\includegraphics[width=0.49\linewidth]{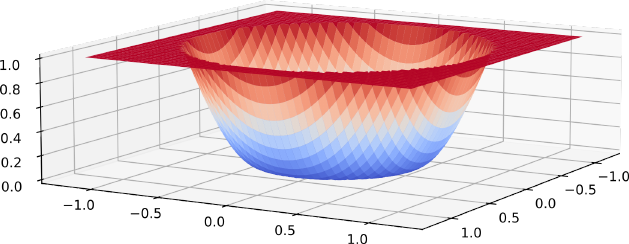}}
	\label{zero-in}
	\caption{Illustration of smooth zero-out and zero-in bump functions.} 
	\label{bump_functions}
\end{figure}

Intuitively, we call $\sqcap_{\mathcal{R}_i}$ (see Fig. \ref{bump_functions}(a)) a smooth zero-out function with respect to the reactive boundary $\mathcal{R}_i$, and $\sqcup_{\mathcal{Q}_i}$ (see Fig. \ref{bump_functions}(b)) a smooth zero-in function with respect to the repulsive boundary ${\mathcal{Q}_i}$. Evidently, $Z_i(\xi)$ approaches smoothly value 1 when $\xi$ approaches $\mathcal{Q}_i$, and approaches smoothly value 0 when $\xi$ approaches $\mathcal{R}_i$, and the converse applies for $S_i(\xi)$.

\subsubsection{Composite vector field}
Using the smooth zero-out and zero-in bump functions to blend two vector fields, we can obtain a composite vector field $\mathcal{X}_c$ as follows:
\begin{equation}
\mathcal{X}_c(\xi)=\left(\prod_{i \in \mathcal{I}} \sqcup_{\mathcal{Q}_i}(\xi)\right) \hat{\mathcal{X}}_{\mathcal{P}}(\xi)+\sum_{i \in \mathcal{I}}\left(\sqcap_{\mathcal{R}_i}(\xi) \hat{\mathcal{X}}_{\mathcal{R}_i}(\xi)\right),
\label{composite VF}
\end{equation}
where $\hat{(\cdot)}$ is the normalization notation (i.e., $\hat{\boldsymbol{v}}=\boldsymbol{v}/ \Vert \boldsymbol{v} \Vert$ for a non-zero vector). In our paper, it is assumed that different reactive areas do not overlap, which is a common assumption; if they overlap, then one can redefine the reactive boundaries such that this assumption is satisfied (e.g., by shrinking the sizes of the boundaries or by merging two boundaries into a large one). Therefore, according to \eqref{zero-out function} and \eqref{zero-in function}, \eqref{composite VF} is equivalent to: 
\begin{equation}
\mathcal{X}_c(\xi) =\left\{\begin{array}{ll}
\hat{\mathcal{X}}_{\mathcal{R}_i}(\xi) & \xi \in \overline{^\text{in} \mathcal{Q}_i}  \\
S_i(\xi) \hat{\mathcal{X}}_{\mathcal{P}}(\xi)+Z_i(\xi) \hat{\mathcal{X}}_{\mathcal{R}_i}(\xi) & \xi \in {^\text{ex} \mathcal{Q}_i} \cap { ^\text{in}} \mathcal{R}_i  \\
\hat{\mathcal{X}}_{\mathcal{P}}(\xi) & \xi \in \overline{^\text{ex} \mathcal{R}_i}
\end{array}\right..
\label{com VF}
\end{equation}
\begin{figure}[h!]
	\centering
	\label{Xc_VF_a}
	\subfloat[VF $\hat{\mathcal{X}}_\mathcal{P} $ for desired path $\mathcal{P}$]{\includegraphics[width=0.485\linewidth]{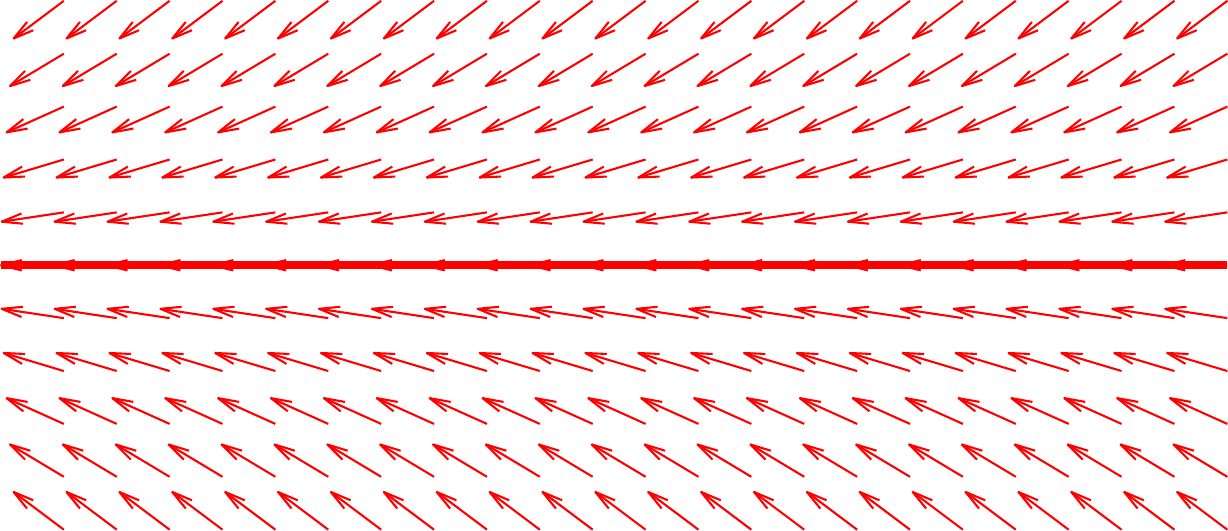}}
	\label{Xc_VF_b}
	\hspace{0.4mm}
	\subfloat[VF $\hat{\mathcal{X}}_{\mathcal{R}_i} $ for reactive boundary $\mathcal{R}_i$]{\includegraphics[width=0.485\linewidth]{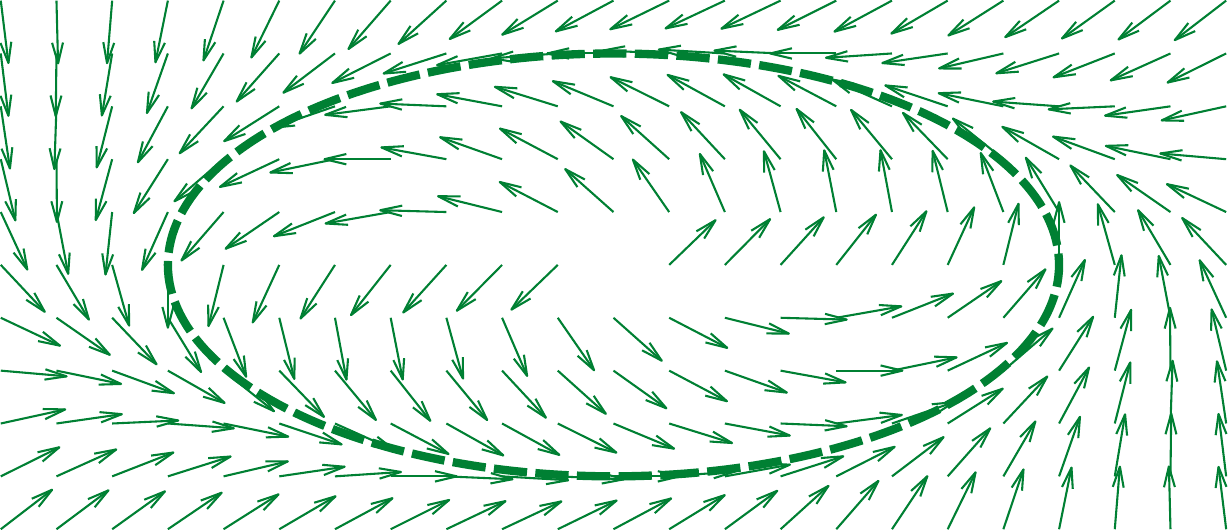}}\\
	\label{Xc_VF_c}
	\subfloat[VF $\sqcup_{\mathcal{Q}_i}(\xi) \hat{\mathcal{X}}_\mathcal{P}(\xi)$]{\includegraphics[width=0.485\linewidth]{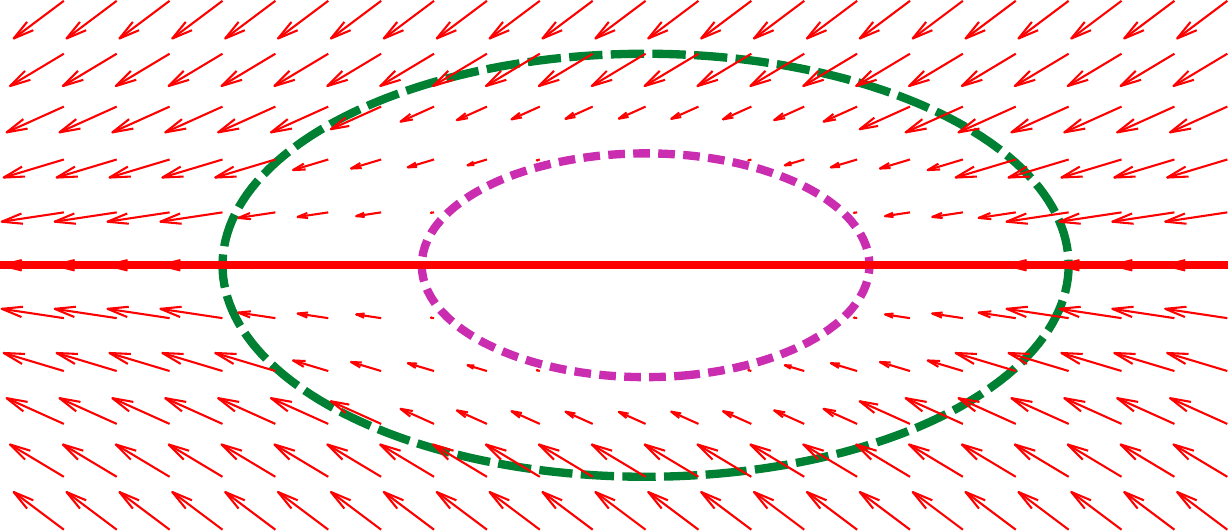}}
	\label{Xc_VF_d}
	\hspace{0.4mm}
	\subfloat[VF $\sqcap_{\mathcal{R}_i}(\xi) \hat{\mathcal{X}}_{\mathcal{R}_i}(\xi)$]{\includegraphics[width=0.485\linewidth]{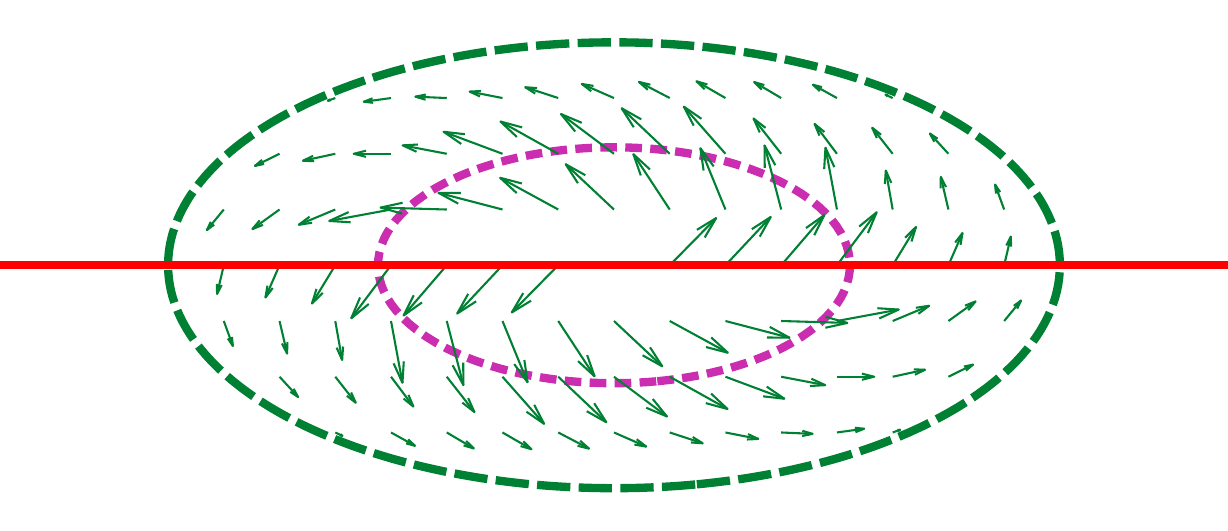}}\\
	\label{Xc_VF_e}
	\subfloat[composite VF $\sqcup_{\mathcal{Q}_i}(\xi) \hat{\mathcal{X}}_\mathcal{P}(\xi) + \sqcap_{\mathcal{R}_i}(\xi) \hat{\mathcal{X}}_{\mathcal{R}_i}(\xi)$]{\includegraphics[width=0.66\linewidth]{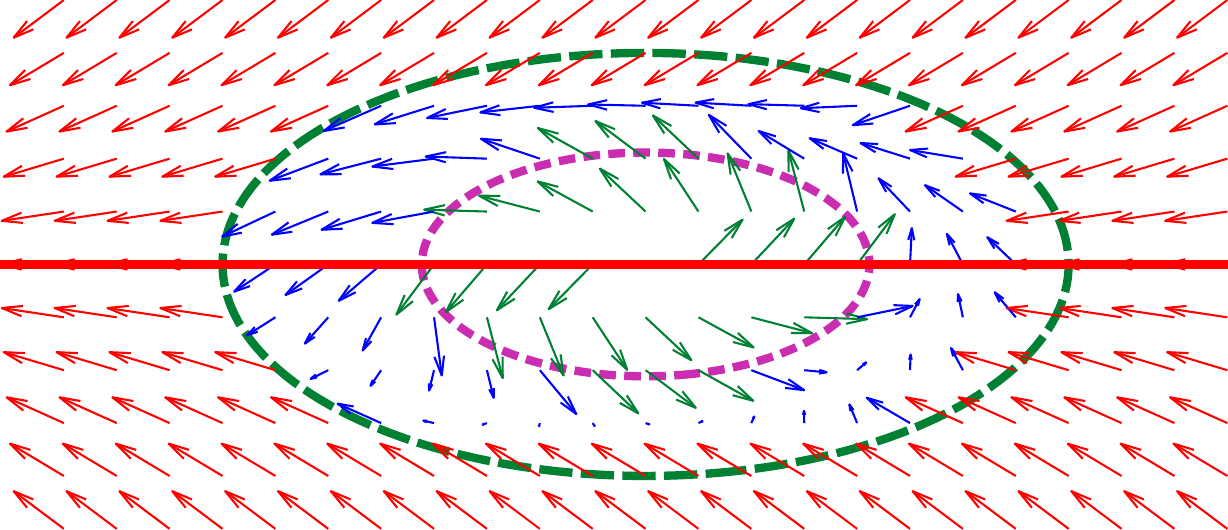}}
	\caption{Illustration of construction of the composite vector field.} 
	\label{Xc_VF}
\end{figure}

The illustration of the construction of the composite vector field is shown in Fig. \ref{Xc_VF}.
The red solid line represents the desired path $\mathcal{P}$, the green dotted line represents the reactive boundary $\mathcal{R}_i$, and the purple dotted line represents the repulsive boundary $\mathcal{Q}_i$.
Each arrow in the subfigures represents a vector of the corresponding vector field at the position.
In Fig.~\ref{Xc_VF}(e), the green arrows belong to the vector field $\hat{\mathcal{X}}_{\mathcal{R}_i}$ generated by the reactive boundary, and are all in $\overline{^\text{in} \mathcal{Q}_i}$.
The red arrows belong to the vector field $\hat{\mathcal{X}}_{\mathcal{P}}$ generated by the desired path, and are all in $\overline{^\text{ex} \mathcal{R}_i}$.
The blue arrows belong to the mixed vector field, the sum of $\hat{\mathcal{X}}_{\mathcal{P}}$ and $\hat{\mathcal{X}}_{\mathcal{R}_i}$, and are all in the mixed area ${^\text{ex} \mathcal{Q}_i} \cap { ^\text{in}} \mathcal{R}_i$. 
More discussion can be found in our previous work \cite{yao2022guiding}.

From this, the \emph{human intention} $\boldsymbol{v}^h_i: \mathbb{R}^2 \to \mathbb{R}^2$ for the $i$-th robot is defined to be the following ``zero-in" guiding vector field:
\begin{equation}
\boldsymbol{v}^h_i(\xi) =  \left(\prod_{j \in \mathcal{I}} \sqcup_{\mathcal{Q}_j}(\xi) \right) \hat{\mathcal{X}}_\mathcal{P}(\xi).
\label{human_intention}
\end{equation}
Notably, it is very difficult for human operators to establish stable communication with each robot because of wireless communication bandwidth limitations and environmental interference.
Therefore, in this paper, we assume that the human operator influences only one of the robots in the formation, and the human intention is propagated through the communication links among the robots and thus indirectly influences the entire MRS.
	The human operator can intuitively draw a desired path via the mouse, the touch screen, or the eye-tracking device, etc., and can choose which robot is affected.

\section{methodology: multi-robot local control}
Without human intervention, the robot should autonomously perform tasks through its own perception and decision-making in firefighting scenarios\footnote{In fact, our human-robot shared control approach can be used in a wide range of studies not limited to firefighting missions, and this paper uses firefighting missions as an illustrative example.}. For such tasks, we assume that 1) a single robot has limited firefighting capability, and a single fire source requires multiple robots to work simultaneously to extinguish it. 2) individual robots should not leave the formation for a long period of time or long distances to maintain communication with others. In this section, we design a local controller for multi-robot coordination, including two subtasks: target selection (robot intention) and formation control, and these subtasks are accomplished by using different vector fields that will be designed in the following subsections.
For convenience, we use the notation $\mathbb{Z}_{a}^{b}$ to denote the set of integers $\{a,a+1,\dots,b\}$ for $a<b$.

\subsection{Robot Intention}
Finding and prioritizing the most threatening fire source to extinguish is the primary task for the MRS. Each robot prioritizes the fire to be extinguished based on its perception of the fire situation, and generates the \emph{robot intention} $\boldsymbol{v}_t^i$, which points to the largest fire source observed by the $i$-th robot.
\begin{figure}[h!]
	\includegraphics[width=\linewidth]{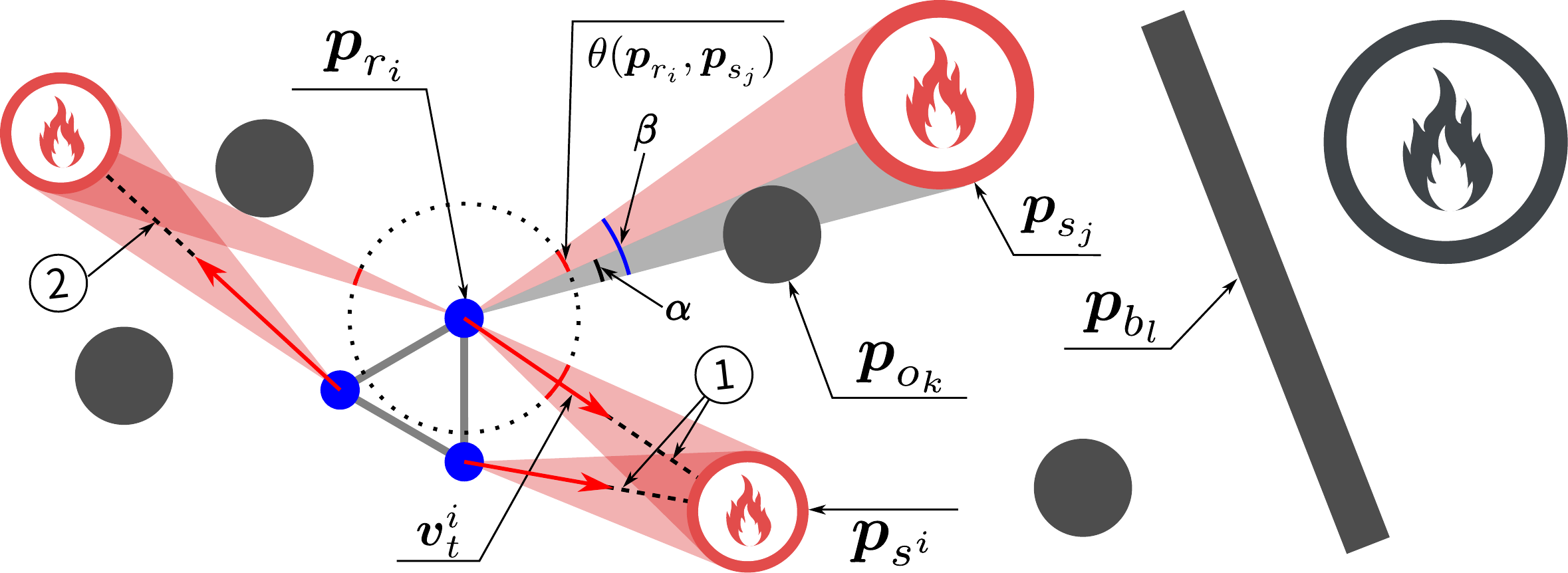}
	\caption{The environment influences the observed fire values.} 
	\label{fire_observation}
\end{figure}

Suppose there are $N$ homogeneous omnidirectional robots, with positions denoted by $\boldsymbol{p}_{r_i} \in \mathbb{R}^2, i\in\mathbb{Z}_1^N$, $M$  fire sources, with positions denoted by $\boldsymbol{p}_{s_i} \in \mathbb{R}^2, i\in\mathbb{Z}_1^M$, $P$ circle obstacles, with positions denoted by $\boldsymbol{p}_{o_i} \in \mathbb{R}^2, i\in\mathbb{Z}_1^P$, and $Q$ barriers, with positions denoted by $\boldsymbol{p}_{b_i} \in \mathbb{R}^2, i\in\mathbb{Z}_1^Q$, in the two-dimensional space.
In this paper, the observed severity of the $j$-th fire source by the $i$-th robot is described by the \emph{observed fire value} $\theta(\boldsymbol{p}_{r_i},\boldsymbol{p}_{s_j}) \ge 0$. This value increases if the fire becomes more severe and decreases if obstacles obstruct the robot's observation. It is worth emphasizing that, in practice, the observed fire value is \emph{directly measured by the onboard sensors}. However, for simplicity, we use a circle centered at the fire source to represent the severity of the fire; the more severe the fire, the larger the radius. 
Then the observed fire value is approximated as the angle calculated by subtracting the angle of view obscured by the obstacle ($\alpha$ in Fig. \ref{fire_observation}) from the angle of view of the fire ($\beta$ in Fig. \ref{fire_observation}) when there are no obstacles. Note that if $\alpha>\beta$, i.e., the fire source is totally obstructed by obstacles, then the observed fire value is zero. Fig. \ref{fire_observation} graphically illustrates how the fire value is determined in an environment with fire sources, robots and obstacles.
Therefore, the $i$-th robot can select the most severe fire source as its prioritized target $s^i$, which is defined as:
\begin{equation}
s^i= \mathop{\arg \max}\limits_{j \in \mathbb{Z}_1^M} \theta(\boldsymbol{p}_{r_i},\boldsymbol{p}_{s_j}).
\end{equation}
%
In some cases, robots may not choose the same target. For example, in Fig.  \ref{fire_observation}, two robots choose \ding{192} as their prioritized target while one robot has a different target: \ding{193}.  
Then, similar to method \eqref{vector field_phi}, we define a vector field for the $i$-th robot based on the target $s^i$:
\begin{equation}
\mathcal{X}^t_{i}(\xi) = -\theta(\boldsymbol{p}_{r_i},\boldsymbol{p}_{s^i})\phi_{s^i}(\xi)\nabla \phi_{s^i}(\xi),
\label{target}
\end{equation}
where $\phi_{s^i}(\xi)=\vert \xi-\boldsymbol{p}_{s^i} \vert ^2-{R_{s^i}}^2$, and $\boldsymbol{p}_{s^i}$, $R_{s^i}$ represent the position and radius of the prioritized target fire source $s^i$, respectively.
Note that the tangential component is removed and only the  orthogonal component remains. Therefore, following this vector field \eqref{target}, the robot would move directly towards the fire target. 


\subsection{Formation Control}
As some robots have different intentions, it may cause them to be separated from the team for too long or too far away, thus breaking away from the formation and resulting in a failure to work together to accomplish the mission.
To address this problem, we adopt the distance-based formation control method to maintain the formation of MRS while moving.

Then, using \eqref{vector field_phi}, we can define a composite vector field for the $i$-th robot based on the neighbors of the $i$-th robot:
\begin{equation}
\mathcal{X}_i^f(\xi) = \sum _{j \in \mathcal{N}_i}- \left\vert \Vert\xi-\boldsymbol{p}_{r_j} \Vert-d_{ij}  \right\vert \nabla \hat{\phi}_{r_j}(\xi),
\end{equation}
where $d_{ij}$ represents the desired distance between the $i$-th and $j$-th robot, $\phi_{r_j}(\xi)=\Vert \xi - \boldsymbol{p}_{r_j}\Vert ^2 - d_{ij}^2$, $\hat{(\cdot)}$ is the normalization notation (i.e., it normalizes the gradient $\nabla \phi_{r_j}$), 
and $\mathcal{N}_i$ represents the neighbor set of the $i$-th robot\footnote{If a robot has a (bidirectional) communication link with the $i$-th robot, then it is a neighbor of the $i$-th robot.
	For example, when a communication range $R_c$ is set, other robots within the range are neighbors of $i$-th robot. Human operators can also initialize the communication connection topology among robots before the task execution.}. Due to the constraints of formation control, robots with greater robot intention will dominate when each robot moves to its selected target. Therefore, the target selection of the entire MRS will eventually become consistent.

%

Then, similar to \eqref{human_intention}, the VF induced by the target (i.e., the robot intention), and the VF for formation control are defined respectively as follows: 
\begin{equation}
\boldsymbol{v}_i^t(\xi) = \left(\prod_{j \in \mathbb{Z}_1^{M+P+Q}}  \sqcup_{\mathcal{Q}_j}(\xi)\right) \mathcal{X}_i^t(\xi),
\label{robot_intention}
\end{equation}
\begin{equation}
\boldsymbol{v}_i^f(\xi) = \left(\prod_{j \in \mathbb{Z}_1^{M+P+Q}}  \sqcup_{\mathcal{Q}_j}(\xi)\right)\mathcal{X}_i^f(\xi).
\label{formation_componet}
\end{equation}

\section{methodology: shared control for the human-multi-robot}

So far, we have obtained the VFs based on the local controller and HI-GVF. Then, using a shared control method, the human intention and the robot intention can be reasonably fused to generate more appropriate actions, which can improve the execution efficiency of the system. In this section, a layered shared control framework is proposed to propagate the human intention and blend the VFs, effectively influencing the entire MRS by controlling one robot.

\subsection{Upper Layer: Intention Field Model}
In the upper layer of the framework, we use an intention field model to allow the human operator to control one robot and to propagate the intention among the robots, thus affecting the whole system.
As stated in the preceding sections, $\boldsymbol{v}^t_i \in \mathbb{R}^2$ obtained by \eqref{robot_intention} represents the \emph{robot intention}, and $\boldsymbol{v}^h_i \in \mathbb{R}^2$ obtained by \eqref{human_intention} represents the \emph{human intention} received by the human-influenced robot. 
In each control cycle, the $i$-th robot updates the \emph{shared intention} $\boldsymbol{v}_i^s$ at time step $k+1$ as follows\footnote{For simplicity, the notation of the shared intention $\boldsymbol{v}_i^s(\xi(k+1))$ at time step $k+1$ is simplified as $\boldsymbol{v}_i^s(k+1)$; the same simplification is used for other notation. }:
\begin{equation}
\begin{aligned}
\boldsymbol{v}_i^s(k+1)= & -\omega_0 \boldsymbol{v}_i^s(k) +\omega_1\left(\boldsymbol{v}_i^t(k)-\boldsymbol{v}_i^s(k)\right) \\
& +\omega_2 \sum_{j \in \mathcal{N}_i} \Phi\left(\boldsymbol{v}_j^s(k)-\boldsymbol{v}_i^s(k)\right) \\
& +\omega_3 \Psi_i\left(\boldsymbol{v}_i^h(k)-\boldsymbol{v}_i^s(k)\right),
\end{aligned}
\label{shared_intention1}
\end{equation}
where $\boldsymbol{v}_i^s(0)=\boldsymbol{0}$,  $\omega_0,\omega_1,\omega_2,\omega_3 \in \mathbb{R}_+$ are weighted coefficients that regulate the degree of influence of the robot intention, the neighbor intention, and the human intention in the update process, and $\Psi_i$ is an indicator function that $\Psi_i = 1$ holds when the $i$-th robot has received the human intention $\boldsymbol{v}_h^i$ and $\Psi_i= 0$ otherwise, and $\Phi(\cdot)$ is a dead zone function defined as:
\begin{equation}
\Phi(\boldsymbol{x})=\left\{\begin{array}{lll}
(\Vert\boldsymbol{x}\Vert-\epsilon) \boldsymbol{x} /\Vert \boldsymbol{x}\Vert&, &\Vert\boldsymbol{x}\Vert>\epsilon \\
\mathbf{0}&, &\Vert\boldsymbol{x}\Vert\leq \epsilon
\end{array},\right.
\end{equation}
where $\epsilon \geq 0$.
The intention field model can help the human operator to control effectively by abstracting the human input as an intention field value and fusing it with the robot intention.
	The shared intention can be viewed as a virtual control intention generated by robot intentions and human intentions.
	Assuming that the desired robot motion speed is $\mathcal{C} >0$, the final shared intention can be represented as
	
	\begin{equation}
	\hat{\boldsymbol{v}}_i^s=\mathcal{C}\boldsymbol{v}_i^s/\|\boldsymbol{v}_i^s\|.
	\label{final_s}
	\end{equation}
The model has the following properties:
\begin{itemize}
	\item [1)]
	\emph{Local similarity}. If the $i$-th robot is close to the $j$-th robot, then $\boldsymbol{v}_s^i$ and $\boldsymbol{v}_s^j$ have similar values.
	\item [2)]
	\emph{Space separation}. If the $i$-th robot is far from the $j$-th robot, then variation in $\boldsymbol{v}_s^i$ has little impact on  $\boldsymbol{v}_s^j$.
\end{itemize}    

Local similarity allows nearby robots to be controlled simultaneously by a human operator, while space separation enables different parts of the MRS to be controlled separately. 
With the intention field model, each robot updates itself under the influence of its neighbors and the human operator.
When one robot receives the human intention, the inter-robot network implicitly propagates that intention to each robot.
This result is fundamentally different from the ``leader-follower'' model, in which followers just react passively to the actions of their neighbors.
The stability analysis of the intention field can be referred to in the Appendix-A.

\subsection{Lower Layer: Policy-Blending Model}

	In the lower layer of the framework, a consensus network is used to blend the shared intention generated by the upper layer and VF of formation control generated by the local controller. 
	We can obtain the final velocity $\boldsymbol{v}^i$ as follows:
	\begin{equation}
	\boldsymbol{v}_i=\lambda_i \hat{\boldsymbol{v}}_i^s+(1-\lambda_i) \boldsymbol{v}_i^f,
	\label{fusing_velocity}
	\end{equation}
	where $\lambda_i=\lambda(\|\boldsymbol{v}_i^s\|,\|\boldsymbol{v}_i^f\|)$, $\boldsymbol{v}_i^s$, $\boldsymbol{v}_i^f$ and $\hat{\boldsymbol{v}}_i^s$ are obtained by \eqref{shared_intention1}, \eqref{formation_componet} and \eqref{final_s}, respectively.
	
	The weighting function $\lambda: \mathbb{R}^+ \times \mathbb{R}^+ \to [0,1]$ need to satisfy the following conditions:
	\begin{itemize}
		\item[1)]
		$\lambda$ is continuous, non-negative, and $\lambda(0,0)=0$.
		\item[2)]
		Let $f_b(a)=\lambda(a,b)$, then $f_b(a)$ is strictly increasing.
		\item[3)]
		Let $f_a(b)=\lambda(a,b)$, then $f_a(b)$ is strictly decreasing.
		\item[4)]
		$\lim_{a \to \infty} f_b(a)=1$ and $\lim_{b \to \infty} f_a(b)=0$
	\end{itemize}
	We design a function that satisfies these conditions
	\begin{equation}
	\lambda(\|\boldsymbol{v}_i^s\|,\|\boldsymbol{v}_i^f\|)=\frac{k_s\|\boldsymbol{v}_i^s\|}{k_s\|\boldsymbol{v}_i^s\|+k_f\|\boldsymbol{v}_i^f\|}
	\label{lambda}
	\end{equation} 
	where $k_s,k_f \in R^+$ are constants.

	The final velocity $\boldsymbol{v}_i$ integrates all the VFs so that with suitable policy-blending, the MRS can maintain its formation and avoid obstacles while reaching the target.
	The stability analysis of the consensus network can be referred to in the Appendix-B.

	\section{collision avoidance based on safety barrier certificates}
	An analysis of the stability of the whole formation when human and target intentions are used as inputs has already been given in the previous section.
	However, when the formation performs obstacle avoidance, it is difficult to ensure the stability of the whole formation anymore. 
	Therefore, it is very important and necessary to adopt suitable collision avoidance methods.
	We use a safety barrier certificates method \cite{wang2017safety} based on optimization in a minimally invasive way, which ensures stability in obstacle avoidance for formation.
	\subsection{collision avoidance among robots}
	Although formation control can keep robots at a distance from each other to some extent, it does not guarantee that robots will not collide with each other at all.
	Therefore, we still need to consider collision avoidance within the formation.
	By first defining a set of safe states, we use the control barrier function (CBF) to formally guarantee forward invariance of the desired set, i.e., if the system starts in the safe set, it stays in the safe set.
	A CBF is similar to a Control Lyapunov Function in that they ensure certain properties of the system without the need to explicitly compute the forward reachable set.
	
	For $i,j\in \mathbb{Z}_1^N,i\neq j$, we define a function
	\begin{equation}
	\mathfrak{h}_{i j}(\boldsymbol{p}_{r_i}, \boldsymbol{p}_{r_j})=\|\boldsymbol{p}_{r_i}-\boldsymbol{p}_{r_j}\|^2-R_r^2,
	\end{equation}
	where $R_r >0$ is the safe distance among robots.
	The function $\mathfrak{h}_{i j}$ reflects whether two robots maintain a safe distance.
	Therefore, the pairwise safe set can be defined as
	\begin{equation}
	\mathfrak{S}_{ij}=\{(\boldsymbol{p}_{r_i}, \boldsymbol{p}_{r_j}) \vert \mathfrak{h}_{i j}(\boldsymbol{p}_{r_i}, \boldsymbol{p}_{r_j} ) \geq 0 \quad \forall i \neq j\}.
	\end{equation}
	Using $\mathfrak{h}_{i j}$, we can define a barrier function
	\begin{equation}
	B_{ij}(\boldsymbol{p}_{r_i},\boldsymbol{p}_{r_j})=\frac{1}{\mathfrak{h}_{i j}(\boldsymbol{p}_{r_i},\boldsymbol{p}_{r_j})}.
	\label{barrier_function}
	\end{equation}
	
	We use the optimization method to adjust the final velocity $\boldsymbol{v}_i$ so that the new velocity $\hat{\boldsymbol{v}}_i$ enables collision-avoidance.
	Specifically, the new velocity $\hat{\boldsymbol{v}}_i$ should subject to the constraint $\dot{B}_{ij} \leq \alpha/B_{ij}$, where $\alpha>0$ is a constant.
	Using \eqref{barrier_function}, we can obtain $\dot{B}_{ij}=-\dot{\mathfrak{h}}_{ij}/\mathfrak{h}_{ij}^2=-2(\boldsymbol{p}_{r_i}-\boldsymbol{p}_{r_j})^\top(\hat{\boldsymbol{v}}_i-\hat{\boldsymbol{v}}_j)/\mathfrak{h}_{i j}^2$, so the constraint $\dot{B}_{ij} \leq \alpha/B_{ij}$ can be rewritten as 
	\begin{equation}
	-(\boldsymbol{p}_{r_i}-\boldsymbol{p}_{r_j})^\top \hat{\boldsymbol{v}}_i+(\boldsymbol{p}_{r_i}-\boldsymbol{p}_{r_j})^\top \hat{\boldsymbol{v}}_j\leq\frac{\alpha}{2}\mathfrak{h}_{i j}^3.
	\label{constraint}
	\end{equation}
	In order to ensure that the main control objectives are met to the greatest extent possible, the collision avoidance strategy should modify the target velocity as little as possible.
	Therefore, we calculate the new velocity $\hat{\boldsymbol{v}}=(\hat{\boldsymbol{v}}_1,\cdots,\hat{\boldsymbol{v}}_N) \in \mathbb{R}^{2N}$ by the quadratic program (QP) as follows:
	\begin{equation}
	\begin{aligned}
	\hat{\boldsymbol{v}}^*=\underset{\hat{\boldsymbol{v}} \in \mathbb{R}^{2N}}{\text{argmin}} & \sum_{i=1}^N\left\|\hat{\boldsymbol{v}}_i-\boldsymbol{v}_i\right\|^2 \\
	\text { s.t. } & \eqref{constraint}, \quad \forall i \neq j 
	\end{aligned}
	\end{equation}
	This optimization problem is centralized as it requires the positions and desired velocities of all robots.
	Further, we can improve it to a distributed form for each robot $i \in \mathbb{Z}_1^N$:
	\begin{equation}
	\begin{aligned}
	\hat{\boldsymbol{v}}_i^*=\underset{\hat{\boldsymbol{v}}_i \in \mathbb{R}^{2}}{\text{argmin}} & \left\|\hat{\boldsymbol{v}}_i-\boldsymbol{v}_i\right\|^2 \\
	\text { s.t. } & 
	-(\boldsymbol{p}_{r_i}-\boldsymbol{p}_{r_j})^\top \hat{\boldsymbol{v}}_i\leq\frac{\alpha}{4}\mathfrak{h}_{i j}^3, \quad \forall j \in \mathcal{N}_i
	\end{aligned}
	\end{equation}
	Therefore, robot $i$ only considers the positions of its neighbors, and does not need to get the desired velocity of other robots.
	
	\subsection{Collision avoidance between robots and obstacles}
	Note that the safety barrier certificates can also deal with static or moving obstacles by treating obstacles as agents with no inputs.
	Similar to the method of collision avoidance among robots, the robot needs to maintain a safe distance between obstacles.
	Therefore, a function can be designed to ensure that the robot $i$ does not collide with the obstacle $\boldsymbol{o}_k$
	\begin{equation}
	\bar{\mathfrak{h}}_{i k}(\boldsymbol{p}_{r_i}, \boldsymbol{p}_{o_k})=\|\boldsymbol{p}_{r_i}-\boldsymbol{p}_{o_k}\|^2-R_o^2
	\end{equation}
	where $R_o$ is the safe distance between the robot and the obstacle.
	
	Similar to \eqref{constraint}, we can obtain the constraint of robot avoiding obstacles:
	\begin{equation}
	-(\boldsymbol{p}_{r_i}-\boldsymbol{p}_{o_k})^\top \hat{\boldsymbol{v}}_i\leq\frac{\beta}{2}\bar{\mathfrak{h}}_{i j}^3,
	\label{constraint_bos}
	\end{equation}
	where $\beta >0$ is a constant.
	Then, the new velocity $\hat{\boldsymbol{v}}_i$ can be calculated by the QP as follows:
	\begin{equation}
	\begin{aligned}
	\hat{\boldsymbol{v}}_i^*=\underset{\hat{\boldsymbol{v}}_i \in \mathbb{R}^{2}}{\text{argmin}} & \left\|\hat{\boldsymbol{v}}_i-\boldsymbol{v}_i\right\|^2 \\
	\text { s.t. } & 
	-(\boldsymbol{p}_{r_i}-\boldsymbol{p}_{r_j})^\top \hat{\boldsymbol{v}}_i\leq\frac{\alpha}{4}\mathfrak{h}_{i j}^3, \quad \forall j \in \mathcal{N}_i\\
	& -(\boldsymbol{p}_{r_i}-\boldsymbol{p}_{o_k})^\top \hat{\boldsymbol{v}}_i\leq\frac{\beta}{2}\bar{\mathfrak{h}}_{i k}^3.
	\end{aligned}
	\end{equation}
	
	It is worth noting that the collision avoidance behavior may affect the convergence of the robot formation, and even deadlocks may occur at certain positions (the deadlock problem is still an intractable challenge at this point), resulting in robots not being able to move. 
	Thus, to ensure that multi-robot cooperative tasks and collision avoidance behavior are feasible, we assume that the required parameters have been properly designed, that the initial Euclidean distances between the robots are all greater than the safe distance $R_r$, and that the initial distances between robots and obstacles are all greater than the safe distance $R_o$. When the robots are unable to move due to the deadlock problem, we temporarily disengage the robots from the deadlock position by increasing the human intent.

\section{Validation}
To verify the effectiveness of our proposed method, we designed a series of experiments based on simulation and physical MRS platforms, respectively.

	Given the function $\varphi_i$ and $c_i$ as in \eqref{Q-smooth}.
	We design the function $S_i$ and $Z_i$ as follows:
	\begin{equation}
	S_i(\xi)=\frac{f_1(\xi)}{f_1(\xi)+f_2(\xi)}
	\end{equation}
	\begin{equation}
	Z_i(\xi)=\frac{f_2(\xi)}{f_1(\xi)+f_2(\xi)}
	\end{equation}
	where $f_1$ and $f_2$ are defined as follows:
	\begin{equation}
	\begin{array}{c}
	f_1(\xi)=\left\{\begin{array}{ll}
	0 & \xi \in\{\varphi(\xi) \leq c_i\} \\
	\exp \left(\frac{l_1}{c_i-\varphi(\xi)}\right) & \xi \in\{\varphi(\xi)>c_i\}
	\end{array}\right. \\
	f_2(\xi)=\left\{\begin{array}{ll}
	\exp \left(\frac{l_2}{\varphi(\xi)}\right) & \xi \in\{\varphi(\xi)<0\} \\
	0 & \xi \in\{\varphi(\xi) \geq 0\}
	\end{array}\right.
	\end{array}
	\end{equation}
	where $l_1 > 0$, $l_2 > 0$ are used to change the decaying or increasing rate.

\begin{figure}[h!]
	\centering
	\includegraphics[width=\linewidth]{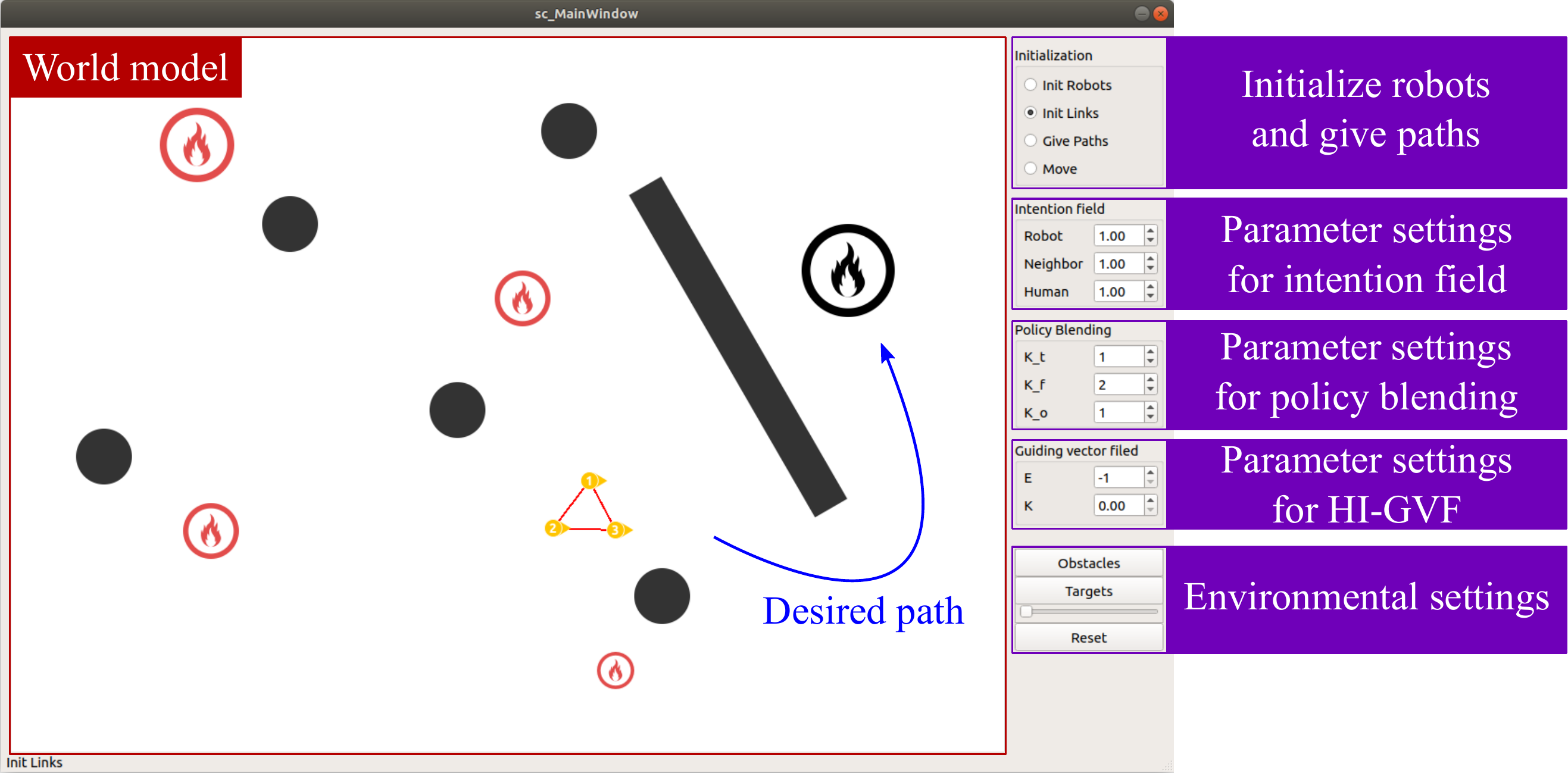}
	\caption{The details of simulation environment.} 
	\label{Qt}
\end{figure}

\begin{figure}[h!]
	\centering
	\subfloat[MRS follows the desired path]{\includegraphics[width=\linewidth]{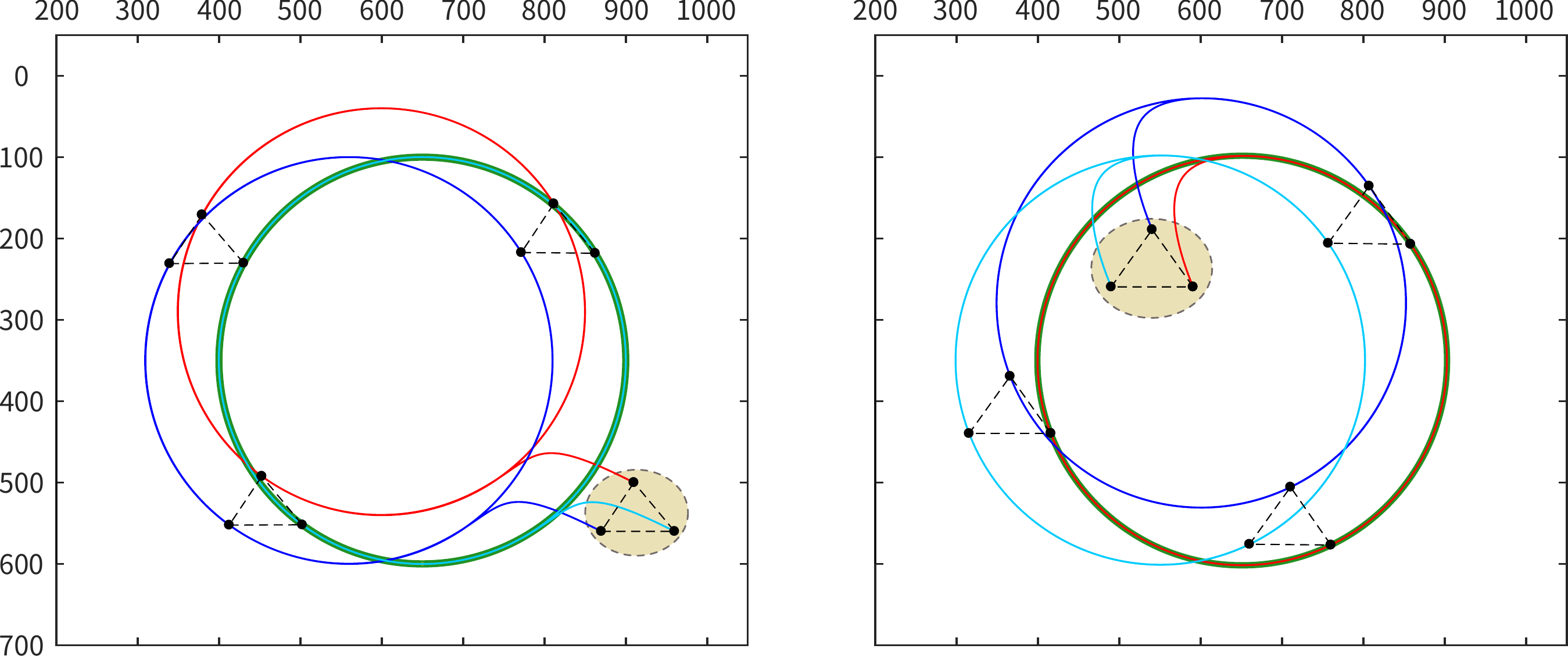}}
	\label{circle1}\\
	\subfloat[MRS follows the desired path with changing affected robots]
	{\includegraphics[width=\linewidth]{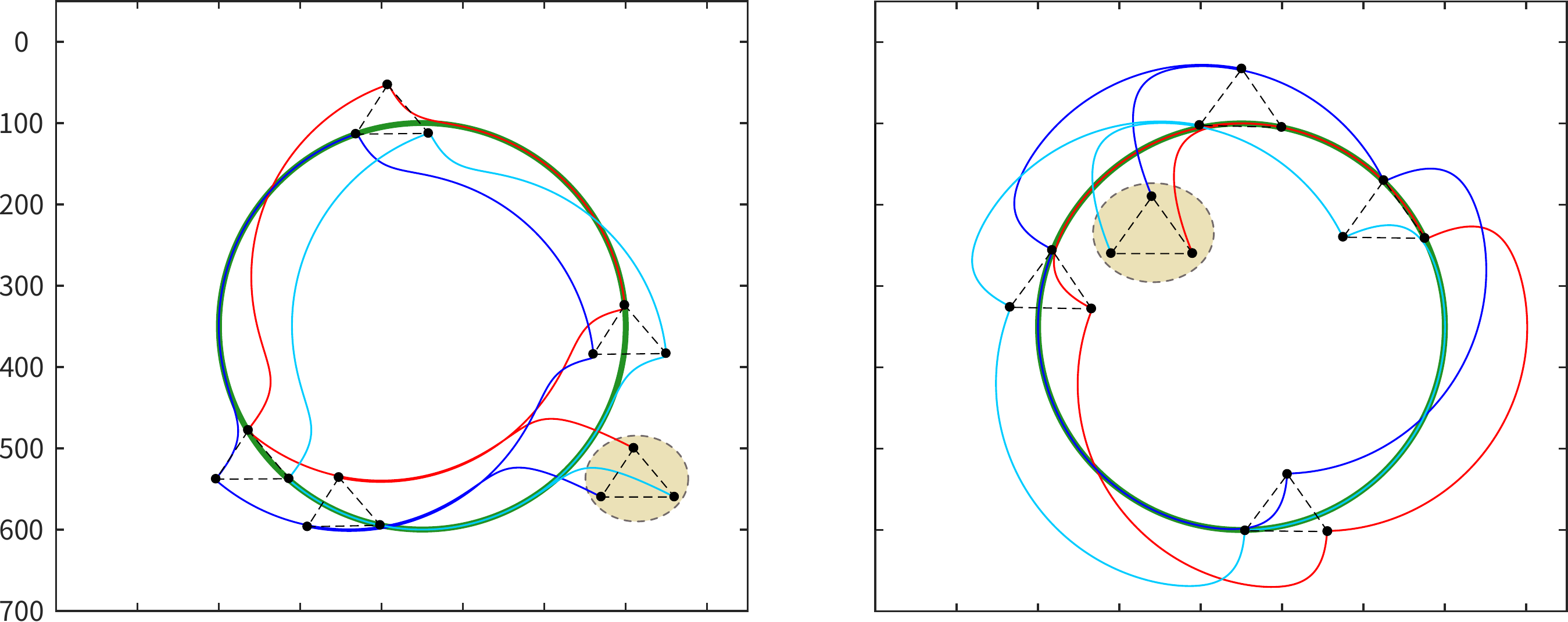}}
	\label{circle2}\\
	\subfloat[MRS follows the desired path when encountering obstacles]{\includegraphics[width=\linewidth]{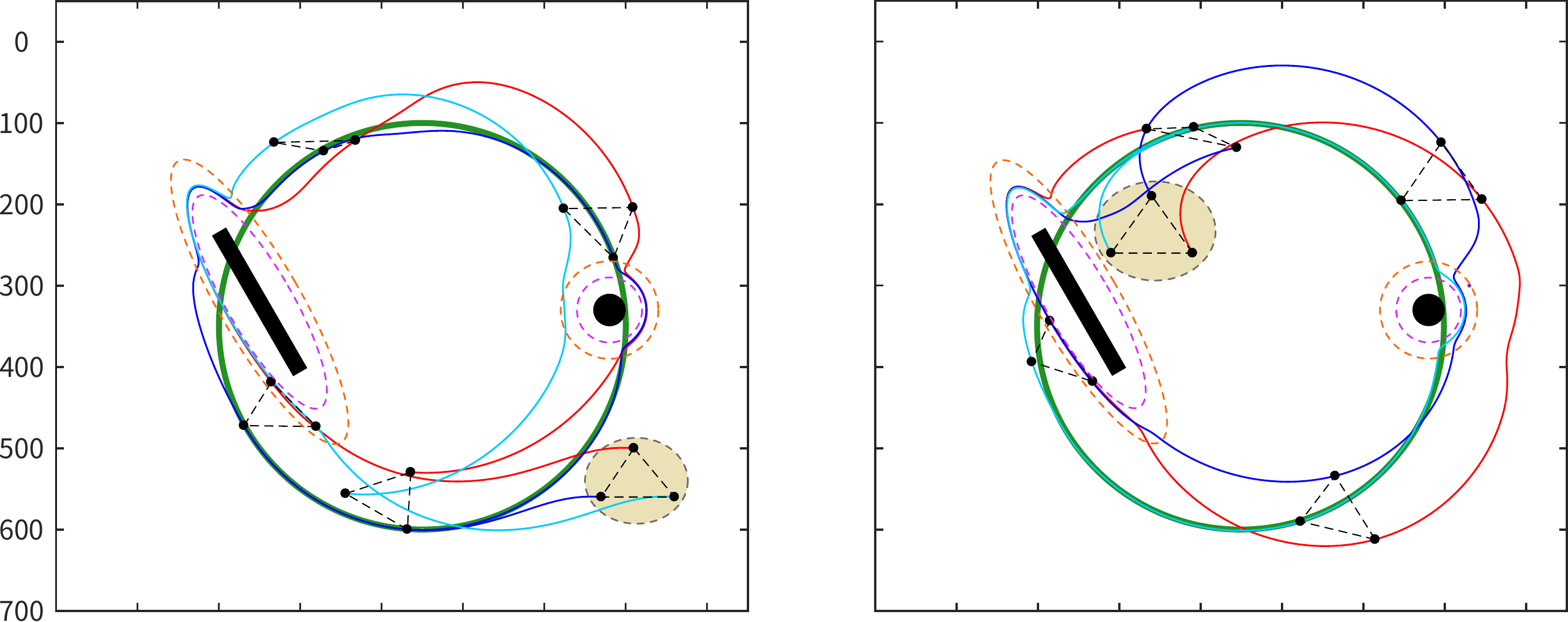}}
	\label{circle3}\\
	\includegraphics[width=\linewidth]{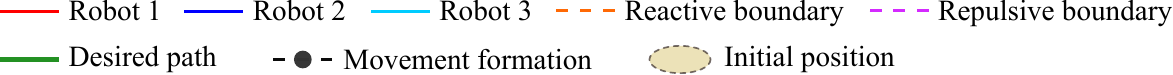}
	\caption{The robot trajectories when MRS follows the desired path.} 
	\label{circle}
\end{figure}

\begin{figure*}[h!]
	\centering
	\subfloat[MRS performs task without human intervention]{\includegraphics[width=0.33\linewidth]{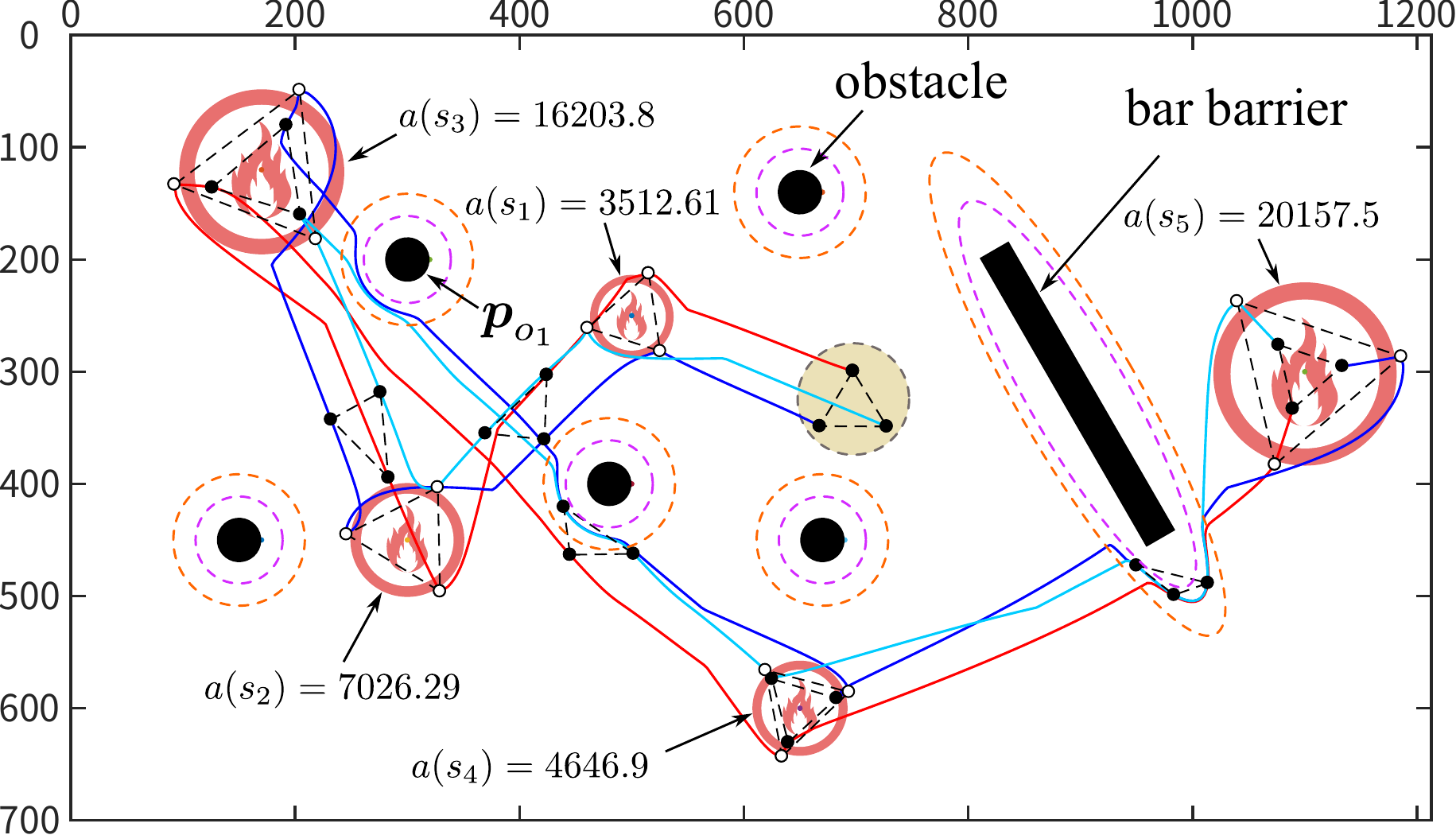}}
	\subfloat[Changing target priority with human]{\includegraphics[width=0.33\linewidth]{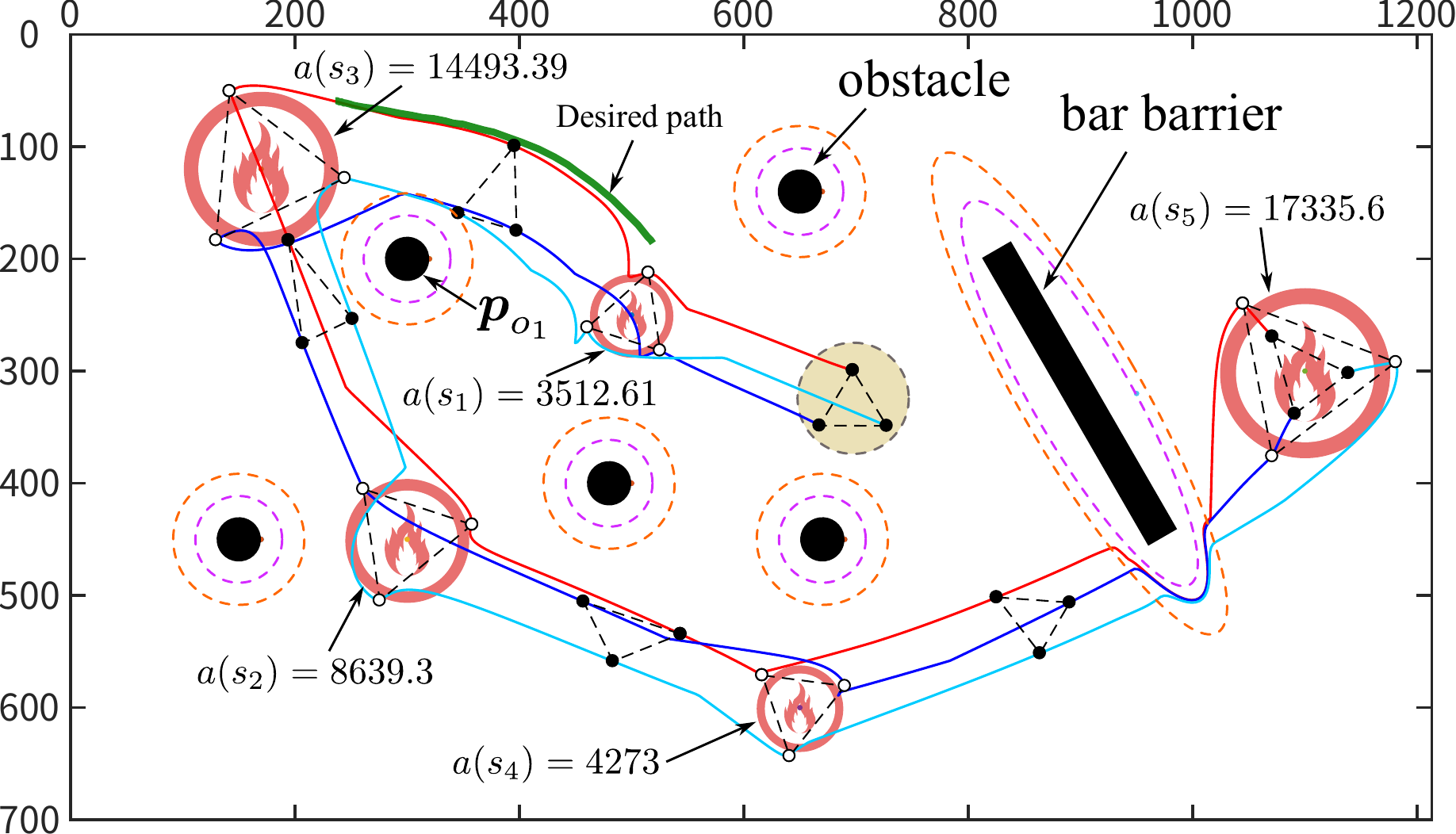}}
	\subfloat[Finding undetected target]{\includegraphics[width=0.33\linewidth]{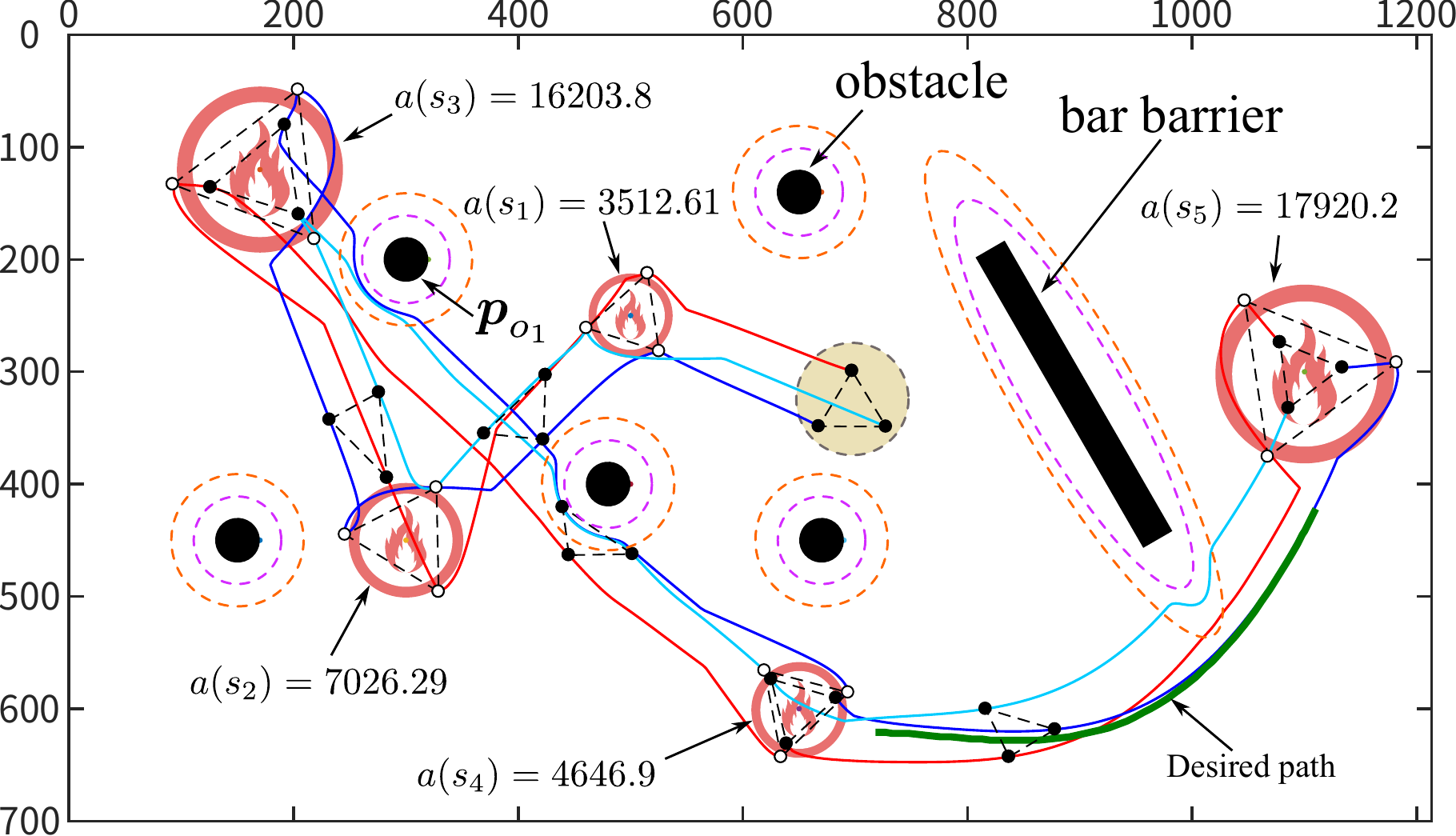}}\\
	\includegraphics[width=\linewidth]{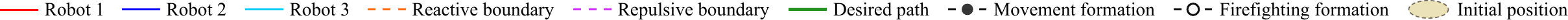}
	\caption{MRS perform task with human intervention.} 
	\label{human_intervention}
\end{figure*}

\subsection{Simulation}

We design a human-computer interface based on QT for simulation experiments as shown in Fig. \ref{Qt}.
The operator can arbitrarily initialize the robot position and connection, modify parameters, etc.
We set up the following three types of experiments to verify the effectiveness of the algorithm.

\subsubsection{MRS follows the desired path without robot intention}
Given the desired path as a circle with center (250,300) and radius 250, the initial positions of the three robots are (590,260), (540,190), (490,260), and (910,500), (870,560), (960,560), which are located inside and outside the circle, respectively.

\textbf{Case 1}: MRS follows the desired path. Selecting a certain robot to be affected, this affected robot is able to follow the desired path from inside or outside the circle.
Besides, the other robots can keep moving in formation, thus generating similar trajectories, as shown in Fig. \ref{circle}(a).

\textbf{Case 2}: MRS follows the desired path with changing affected robots. When the MRS follows the desired path, the human can choose any robot affected. Thanks to the influence of the intention field, the entire MRS can respond quickly and continue to follow the desired path, as shown in Fig. \ref{circle}(b).

\textbf{Case 3}: MRS follows the desired path when encountering obstacles. In this case, there are bar barriers and circle obstacles in the environment. The robots will temporarily deviate from the desired path and change formation to avoid obstacles, which can be irregularly shaped, as shown in Fig. \ref{circle}(c).

\subsubsection{Shared control of human intervention}
For firefighting tasks, we consider the following scenario where human intervention can make the MRS more efficient to reduce the damage caused by fire. There are five fire sources, several circle obstacles, and a bar barrier in the environment. The initial setup of these experiments is shown in TABLE \ref{setup}. The fire source radius increases with time (1m/s), and the time required to extinguish the fire is proportional to the size of the fire; the area covered by the $i$-th fire source is denoted by the loss area $a(s_i)$. The ultimate goal of the MRS in extinguishing the fire is to minimize the total loss area.

\begin{table}[]
	\centering
	\caption{Initial setup of simulation experiments.} 
	\resizebox{\linewidth}{!}{
		\begin{tabular}{l|l}
			\toprule
			Robot(Position)          &           (700,300)   \ \ (670,350)    \ \ (730,350)                                            \\ \midrule
			Fire source(Position)[Size] & \begin{tabular}[l]{@{}l@{}}(500,250)[20] \ \  (300,450)[30] \ \ (170,120)[50]\\(650,600)[10] \ \             (1100,300)[40]\end{tabular} \\ \midrule
			Obstacles(Position)[Size]                & \begin{tabular}[l]{@{}l@{}}(300,200)[20] \ \  (670,450)[20] \ \ (480,400)[20]      \\   (150,450)[20] \ \   (650,140)[20]     \end{tabular}                             \\ \bottomrule
		\end{tabular}
	}
	\label{setup}
\end{table}

\textbf{Case 1}: MRS changes target priority with human intervention. 
The robots choose the next priority fire to be extinguished based on their observations.
The MRS will maintain the formation and make adjustments when one robot encounters obstacles while performing the task. 
During the firefighting process, the robots are distributed to the appropriate locations according to the size of the fire. 
After extinguishing fire source 1, the robots choose fire source 2 over the actual more threatening fire source 3 due to the obstruction caused by obstacle 1. 
When humans intervene, the robot will be guided to change its trajectory to discover and select the fire source 3, as shown in Fig. \ref{human_intervention}.
The results in Table \ref{comparison} show that the area of damage caused by the fire is reduced after human intervention.

\textbf{Case 2}: MRS finds undetected target with human intervention. 
After the robots extinguish fire source 2, there is still a fire source left in the environment. 
Because of the barriers, the robot could not detect the fire until the fire spreads further, which would cause more damage. 
At this point, human intervention can guide the robot to detect the target early and reduce the damage, as shown in Table \ref{comparison}.

\begin{table}[]
	\centering
	\caption{comparison of the loss areas with different situations.}
	\begin{tabular}{@{}l|l|l|l@{}}
		
		\toprule
		Loss area & \multicolumn{1}{c}{Without human}                                    & \multicolumn{2}{c}{With human}                                                                                                        \\ \midrule
		& \begin{tabular}[c]{@{}l@{}}Robot autonomous\\ operation\end{tabular} & \begin{tabular}[c]{@{}l@{}}Change target \\ priority\end{tabular} & \begin{tabular}[c]{@{}l@{}}Find undetected \\ target\end{tabular} \\\midrule
		$a(s_1)$         & 3512.61                                                              & 3512.61                                                           & 3512.61                                                           \\
		$a(s_2)$         & 7026.29                                                                 & 8639.3                                                               & 7026.29                                                              \\
		$a(s_3)$         & 16203.8                                                                  & 14491.39                                                              & 16203.8                                                                \\
		$a(s_4)$         & 4646.9                                                                  & 4273                                                             & 4646.9                                                             \\
		$a(s_5)$         & 20157.5                                                                  & 17335.6                                                                & 17920.2                                                              \\
		Sum       & 51547.1                                                                   & \textbf{48253.9}                                                                & \textbf{49309.8}                                                                \\ \bottomrule
	\end{tabular}
	\label{comparison}
\end{table}

	\subsection{Human-computer interaction methods}
	As we mentioned earlier, we assume that in a scenario of human-multi-robot collaboration, humans obtain a global view through the UAV, thus guiding the robots to accomplish the task together using multiple human-robot interactions. We briefly overview the brain-computer interfaces, myoelectric wristbands, and eye-tracking devices used.
	\subsubsection{brain-computer interface}
	The brain-computer interface (BCI) is a unique form of human-computer interaction that uses brain activity to communicate with the external world \cite{wolpaw2002brain}. Researchers typically capture brainwave signals using non-invasive electrodes placed on the scalp. Unlike traditional methods, BCI generates commands directly from EEG signals, reflecting users' choices and uncertainties \cite{lotte2018areview}. Therefore, BCI is ideal for controlling intelligent robots.
	
	Our experiment uses the NeuSen W wireless EEG system, which is portable, stable, and well-shielded, equipped with a nine-axis motion sensor for both controlled and natural experiments.
	The hardware includes a 64-channel EEG cap and an amplifier, with electrodes positioned according to the international 10-20 system.
	\subsubsection{myoelectric wristbands}
	The EMG interface captures electromyography signals from the skin to interpret human intent. Surface-measured muscle contractions range from tens to hundreds of microvolts. Although weak and prone to noise, EMG signals directly reflect body activity, making them an advanced HCI for predicting user intentions \cite{atzori2014electromyography}. Extensive research has utilized EMG to control robots \cite{wolf2013gesture,kundu2018hand,luo2020a,suresh2020human}.
	
	We have developed an EMG acquisition device that contains an array of standard medical snap buttons compatible with dry Ag/AgCl and gel electrodes. These snap buttons are arranged longitudinally in pairs along the arm to detect distinct muscle activities. 
	Our wristband supports eight channels for surface EMG acquisition. This multi-channel design improves sensitivity and fidelity, enabling the detection of subtle hand gestures and fine motions.
	
	\subsubsection{eye-tracking device}
	The eye-tracker monitors and records human eye movements using infrared light and camera technology. This data can be analyzed to understand users' intentions, predict future actions, and gain insights into their cognitive states \cite{koochaki2018predicting}. Eye-tracking techniques have been widely used to recognize human intentions for indirect input or task assistance \cite{shafiei2024development,aronson2018eye,ming2016anticipatory}.
	
	Intent prediction through eye movements is natural and seamless, serving as an efficient interface. Our system uses a simple Tobii eye-tracker to capture eye movements. The eye-tracker captures gaze positions over time, allowing us to derive gaze trajectories and construct heat maps that illustrate attention distribution and estimate cognitive workload.
	

	\subsection{Physical Multi-robot Experiment}
	Based on our self-designed human-multi-robot interaction system, we conduct physical robotics experiments. 
	All participants were provided with written informed consent to participate in the experiment in accordance with the Declaration of Helsinki.
	We consider the following two scenarios:
	(1) The desired path is given in advance, and the robot moves along the path motion and deviates from the path when the target is found. 
	(2) While the robot is executing the task, a human intervenes to change the execution sequence.
	\subsubsection{case 1}
	When the robot cannot directly observe the target, it is slow to search for it by directly exploring the environment. At this point, the human operator can utilize the information provided by the UAV to guide the robot's movement, given the trajectory, which can improve the efficiency of the task.
	Consider a scenario like the one in Fig. \ref{physical_case1}, where the environment has circular and bar-shaped obstacles, simulating objects such as general obstacles and walls in real situations.The robot should avoid collisions with obstacles as well as other robots at all times during the task. Besides, it is assumed that the robot has a limited perceptual range and can only detect the target when it is close to it. The red line is the human’s given path. When the robots find the target, they break away from the path, and when the task is completed, the robots can still be guided to the given path.
	\begin{figure}[h!]
		\includegraphics[width=\linewidth]{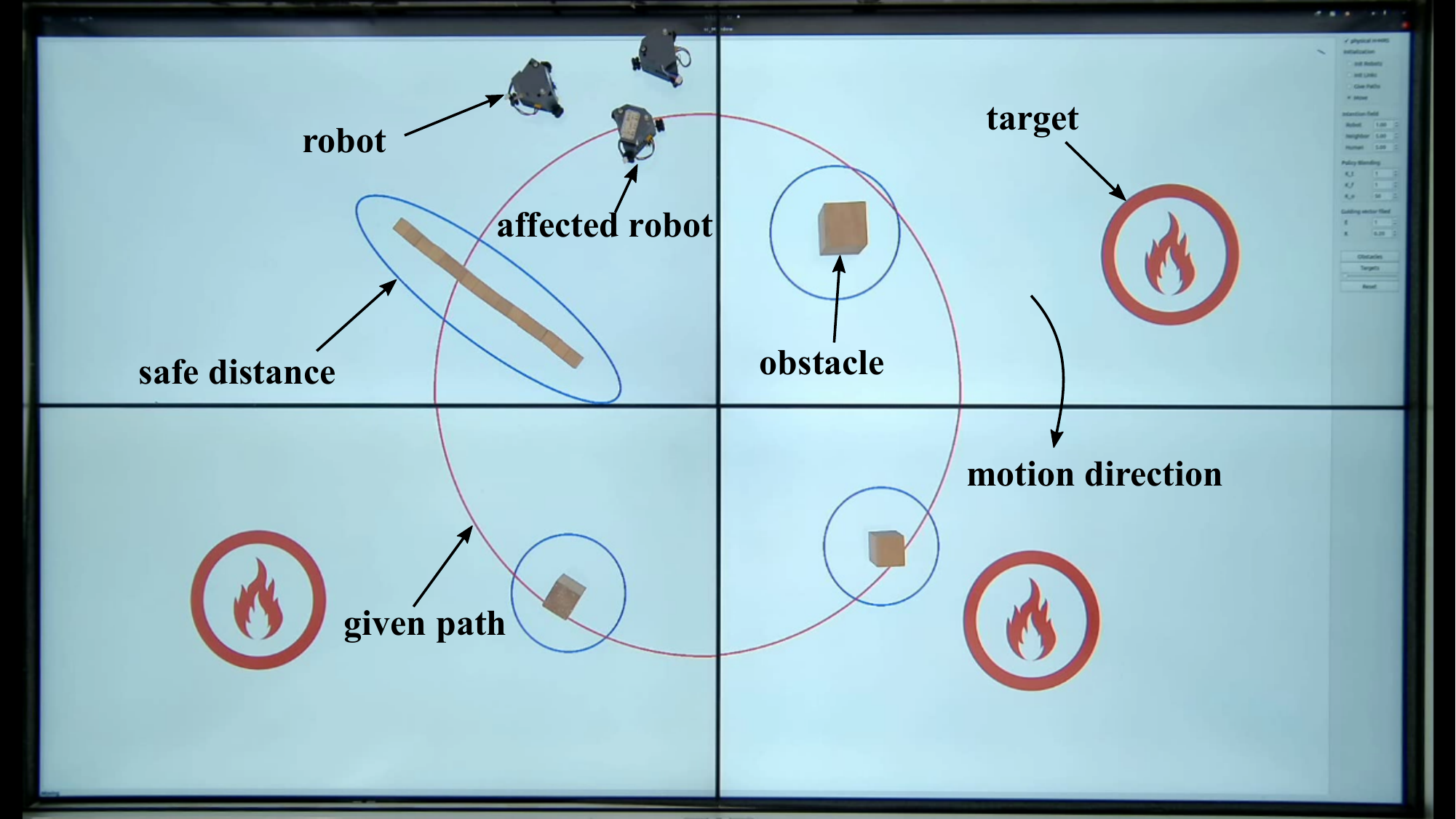}
		\caption{physical experimental scenario for case 1.} 
		\label{physical_case1}
	\end{figure}
	
	The experimental result is shown in Fig. \ref{physical_result1}. The robots
	achieve the obstacle avoidance requirement throughout, even when the desired path crosses barriers. Besides, The robots are able to follow the desired path and reach the target position for operation.
	\begin{figure}[h!]
	\includegraphics[width=\linewidth]{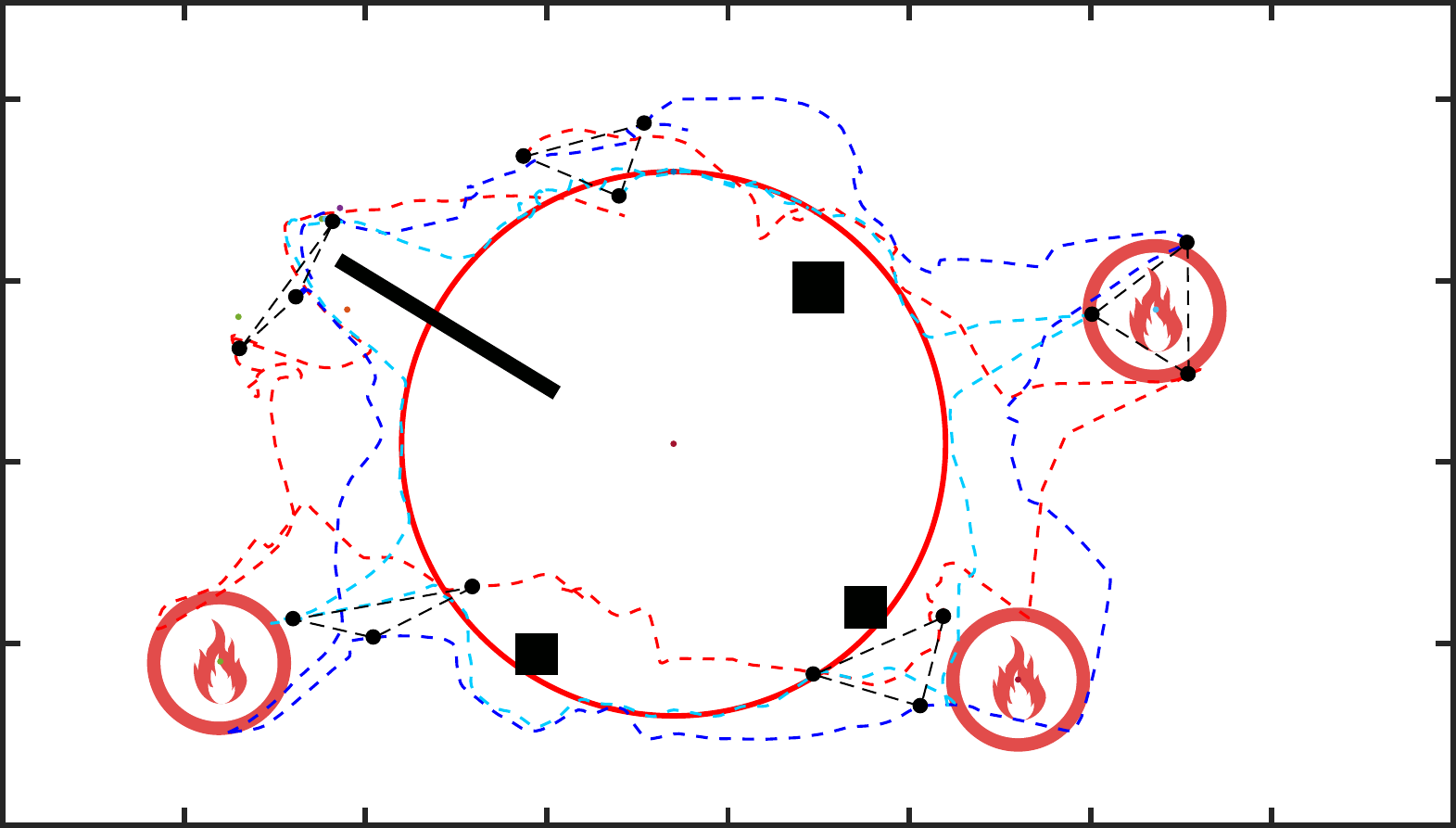}
	\caption{Robot trajectories of case 1.} 
	\label{physical_result1}
\end{figure}
	\subsubsection{case 2}
	A human operator intervenes online in real-time with a robot based on a human-multi-robot interaction interface. In this case, the eye-tracker is used to capture the gaze to draw a desired path, the BCI interface is used to select the robot to be guided, and gestures are used to switch the task progress. The experimental equipment is shown in Fig. \ref{test}. Although the operator in the picture is holding the drone remote control in his hand, it is only to simulate that the operator can use tiny gestures for ground robot control even though his hands are manipulating the drone. In fact, we do not use the remote control for the whole experiment. It is worth noting that the entire task process does not require using
	traditional interaction methods such as mouse and keyboard, which provides a flexible and feasible solution for human-multi-robot collaboration.
	
	\begin{figure}[h!]
		\includegraphics[width=\linewidth]{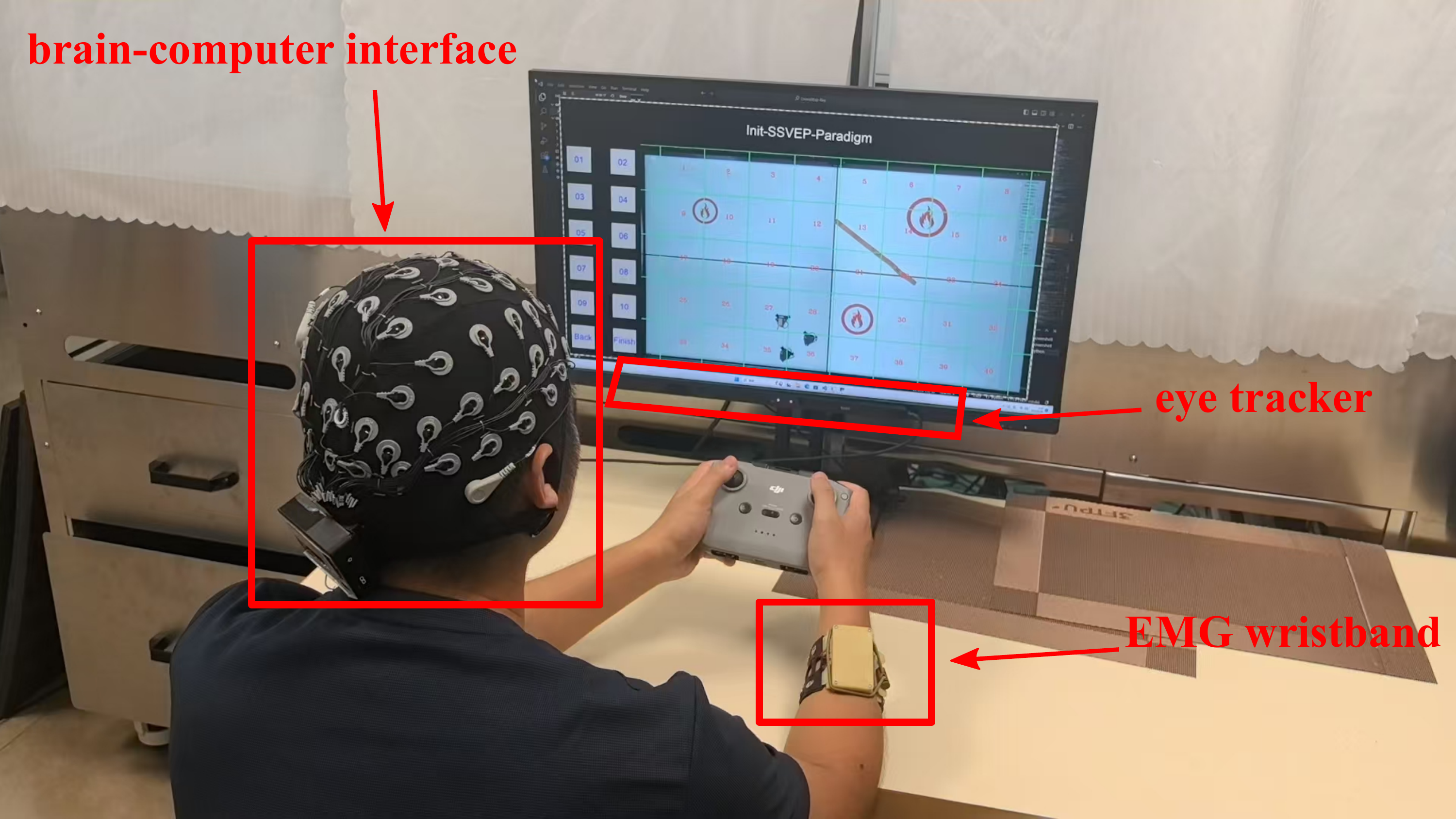}
		\caption{Various HCI devices used in the experiment.} 
		\label{test}
	\end{figure}
	\begin{figure}[h!]
		\includegraphics[width=\linewidth]{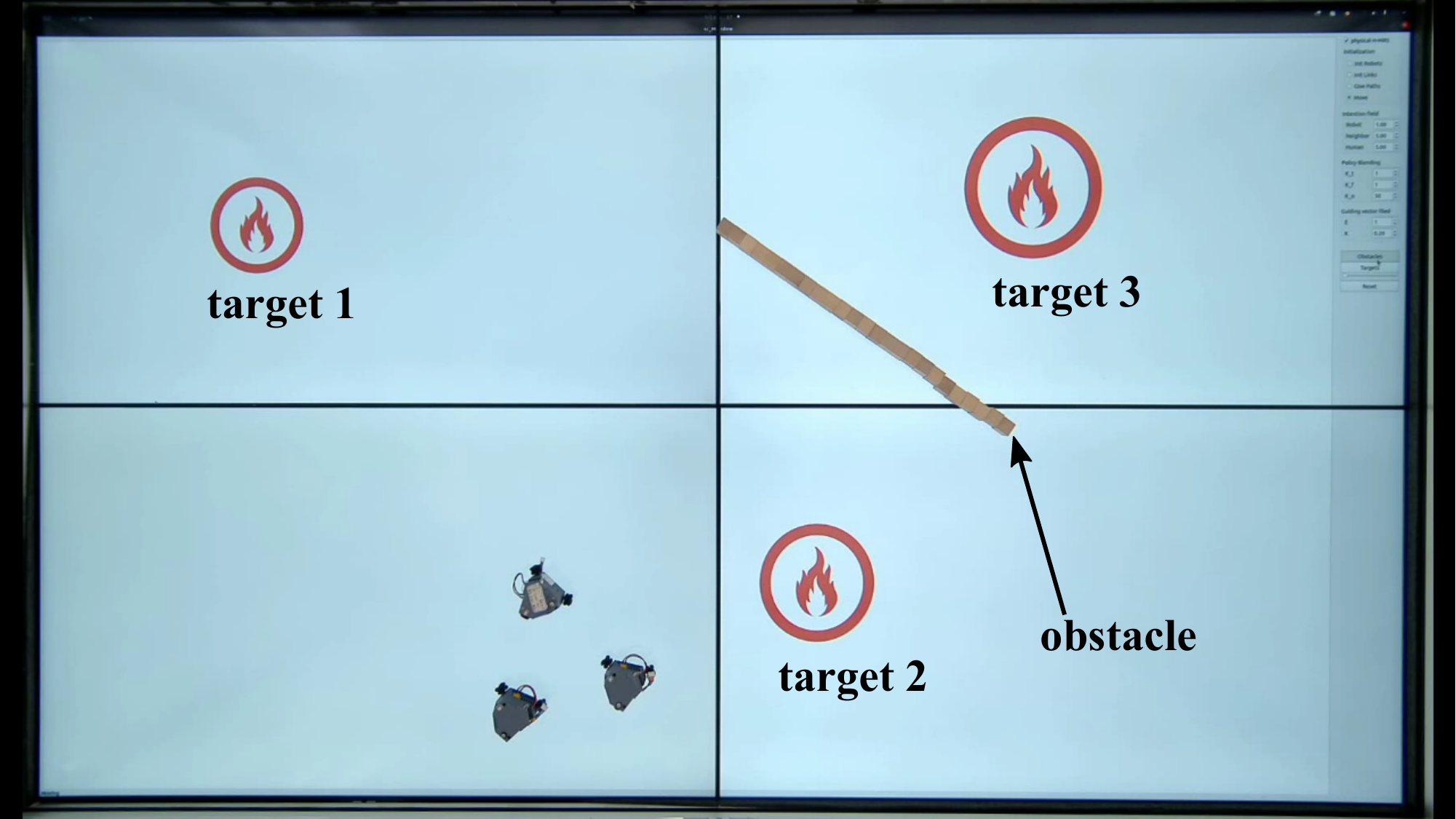}
		\caption{Physical experimental scenario for case 2.} 
		\label{physical_case2}
	\end{figure}
	
The experimental scenario is shown in Fig. \ref{physical_case2}. In this case, the robots are not able to observe targrt 3, which has the highest threat level, due to the occlusion of the barrier.
The robots operate autonomously with an execution sequence target 2 $\rightarrow$ target 1 $\rightarrow$ target 3, which leads to great damage.
At this point, the human operator guides the robot around the obstacle to prioritize the elimination of target 3. Fig. \ref{physical_result}(a)-(f) show the motion of the robot in sequence.
The direction of the red arrow in the figure represents the robot’s current direction of motion. 
In Fig. \ref{physical_result}(a), the robots are close to target 2, and target 2 is a greater threat than target 1, so the robots prioritize movement toward target 2. When target 2 is eliminated, the robots can only observe target 1 due to the obstacles, so the robots autonomously move towards target 1 (see Fig. \ref{physical_result}(b)). However, in reality, the threat level of target 3 is greater.
At this point, the human operator guides the robot along the desired path by first pausing the robot’s motion through hand gestures, giving the desired path through eye
movements, and selecting the affected robot through a brain-computer interface (see Fig. \ref{physical_result}(c)). 
When the robots move to the end of the desired path, the robots move toward target 2 (see Fig. \ref{physical_result}(d)), and later move towards target 1 avoiding obstacles. The robot trajectories are show in Fig. \ref{physical_result_2path}.

\begin{figure}[h!]
	\centering
	\subfloat[]{\includegraphics[width=0.47\linewidth]{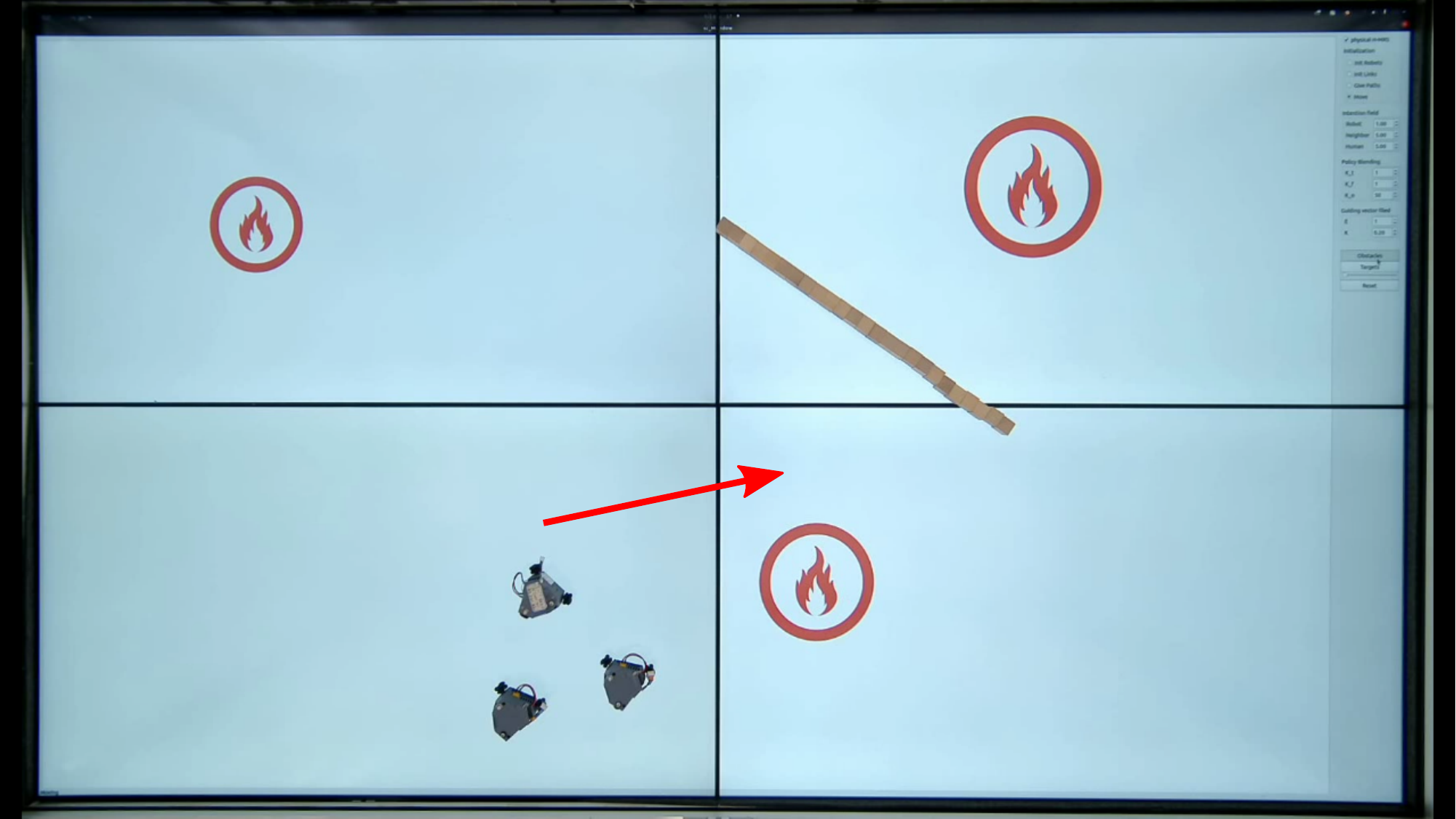}}
	\hspace{1mm}
	\subfloat[]{\includegraphics[width=0.47\linewidth]{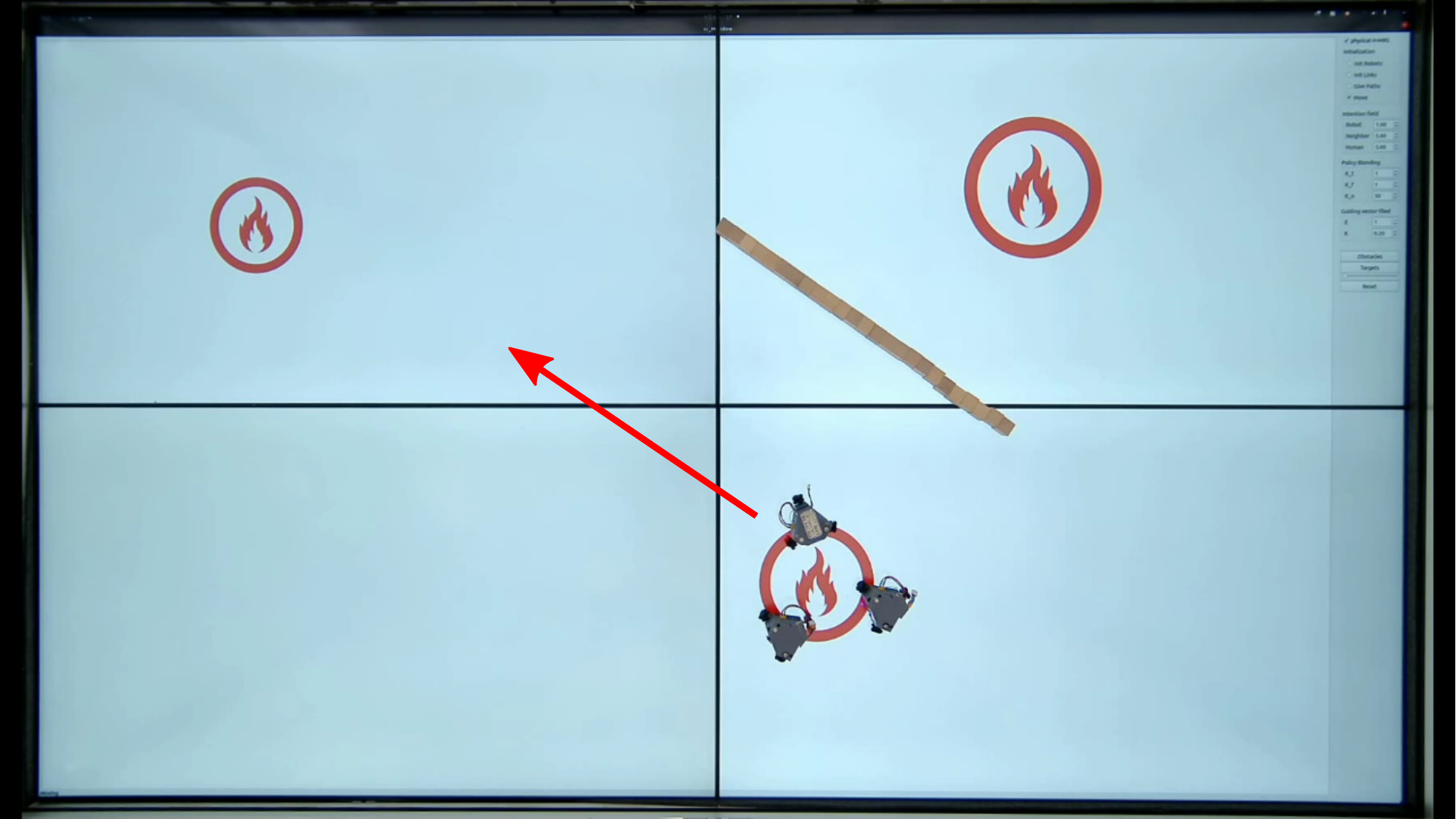}}\\
	\subfloat[]{\includegraphics[width=0.47\linewidth]{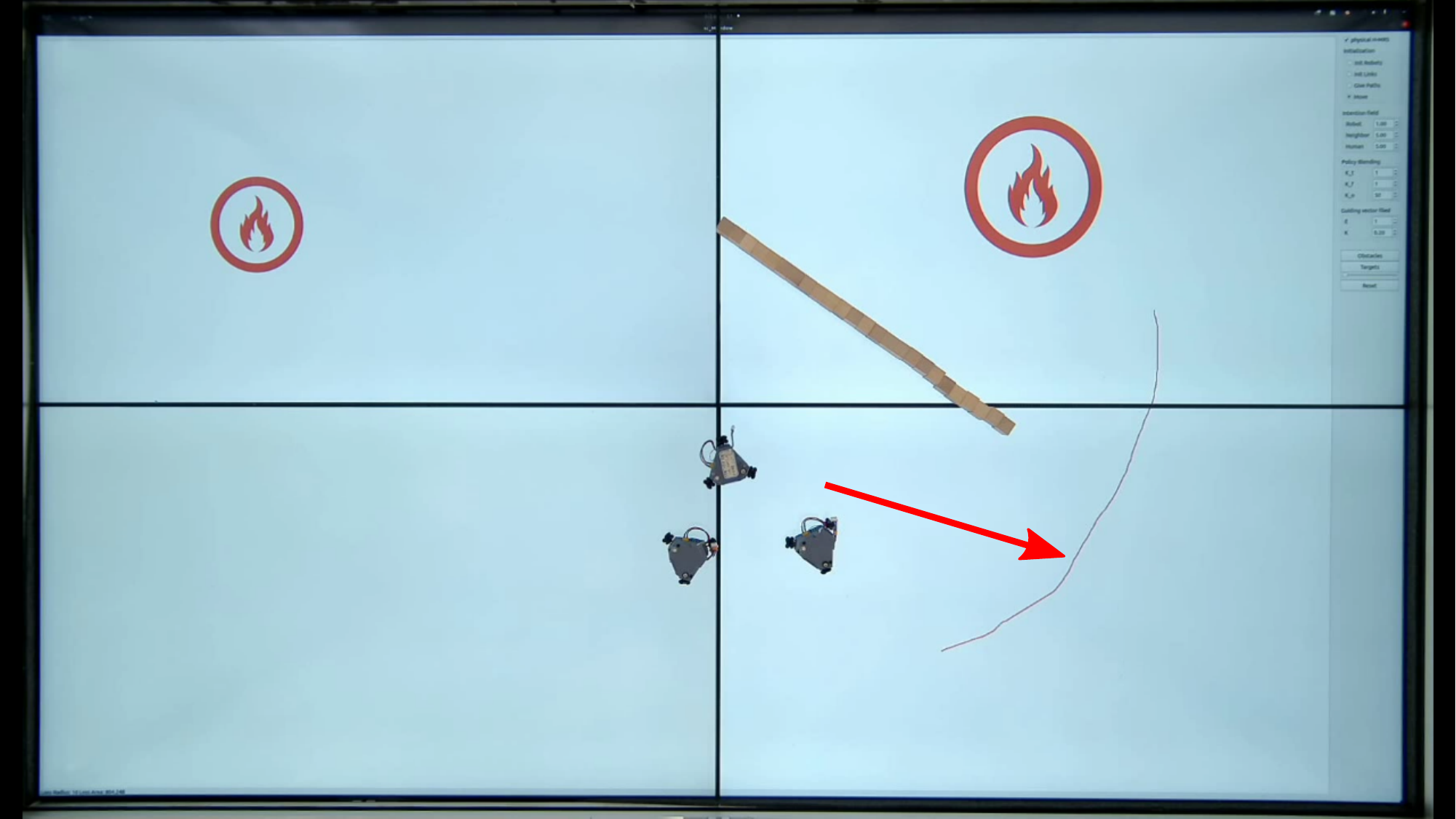}}
	\hspace{1mm}
	\subfloat[]{\includegraphics[width=0.47\linewidth]{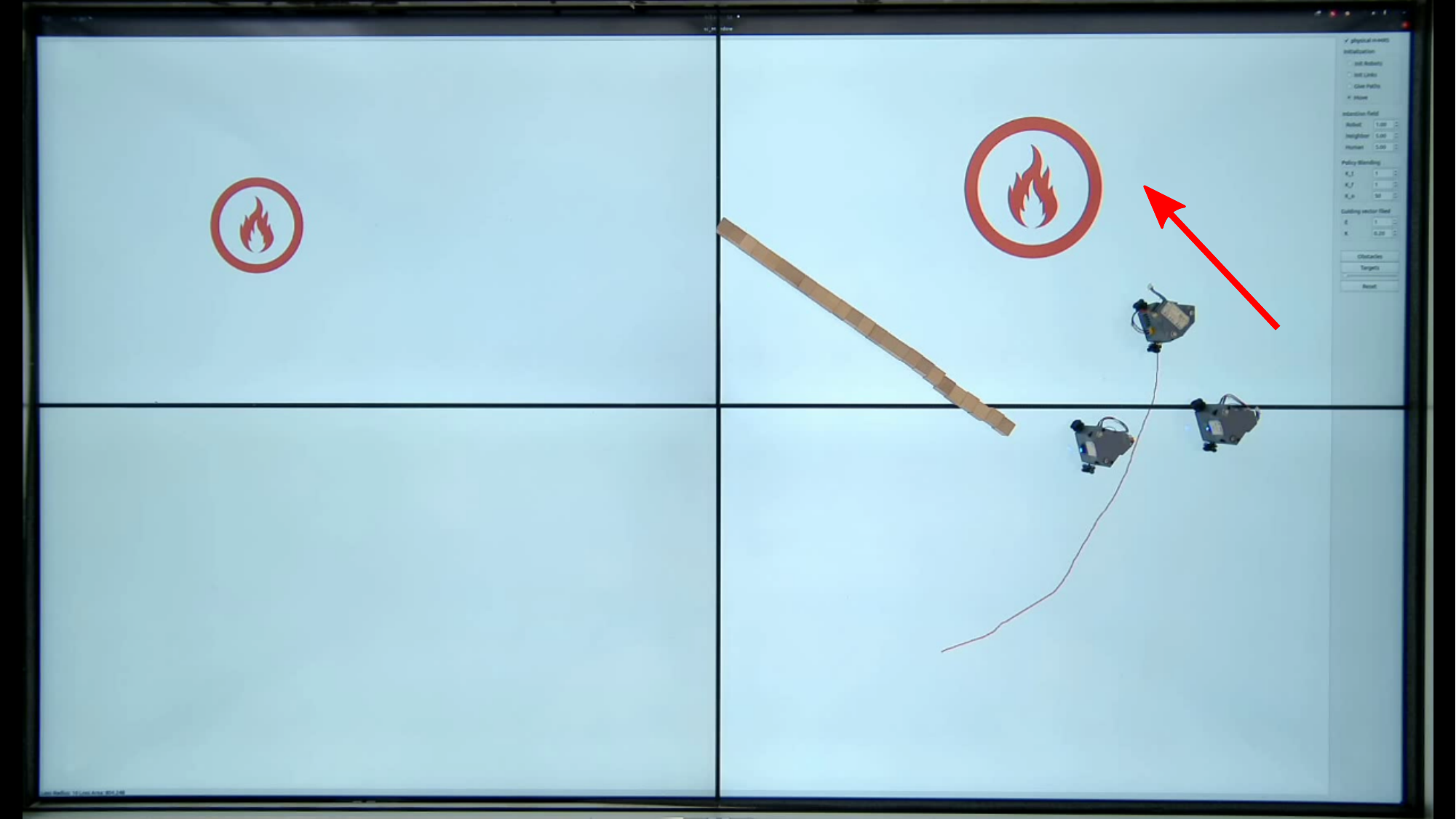}}\\
	\subfloat[]{\includegraphics[width=0.47\linewidth]{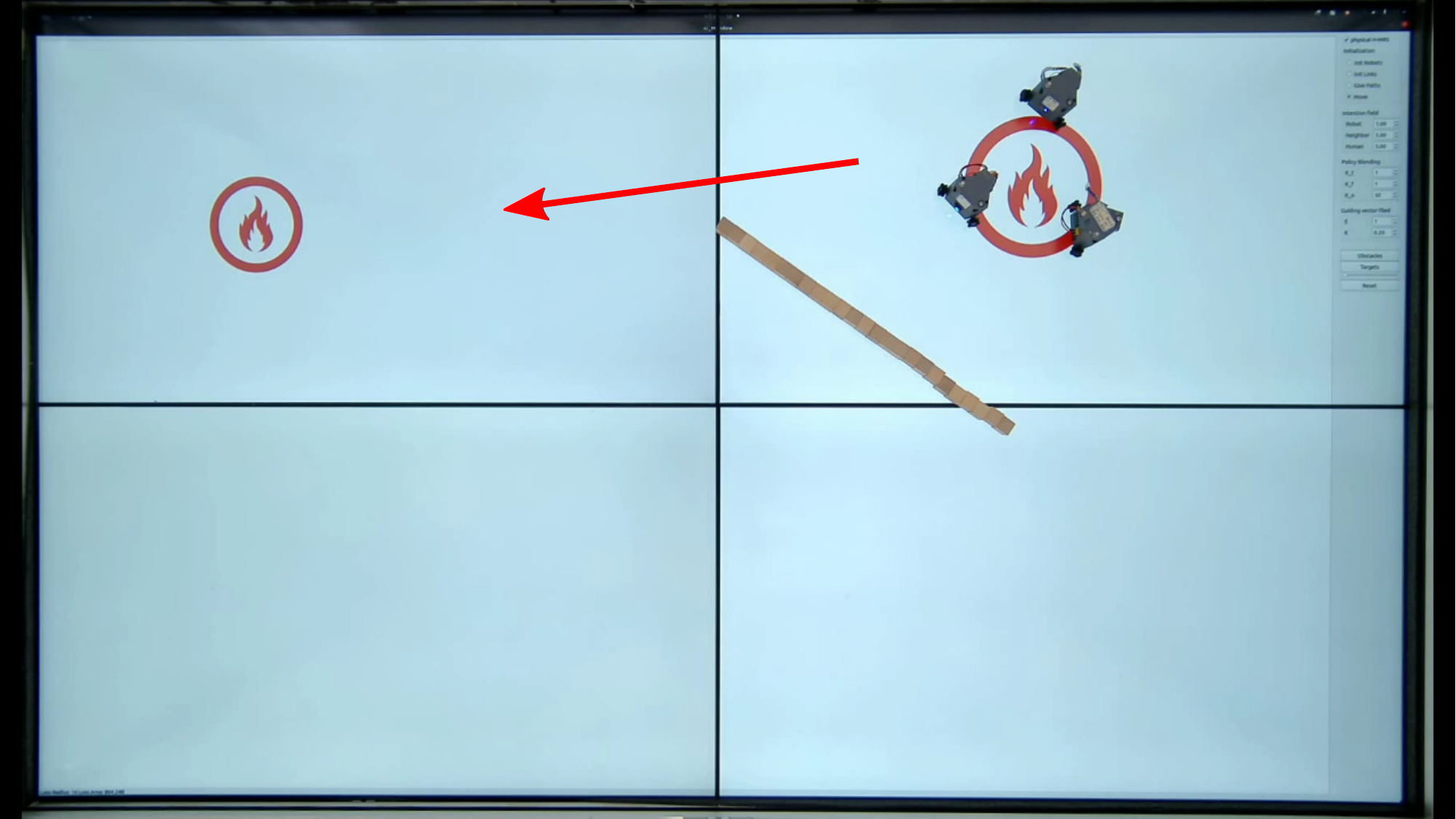}}
	\hspace{1mm}
	\subfloat[]{\includegraphics[width=0.47\linewidth]{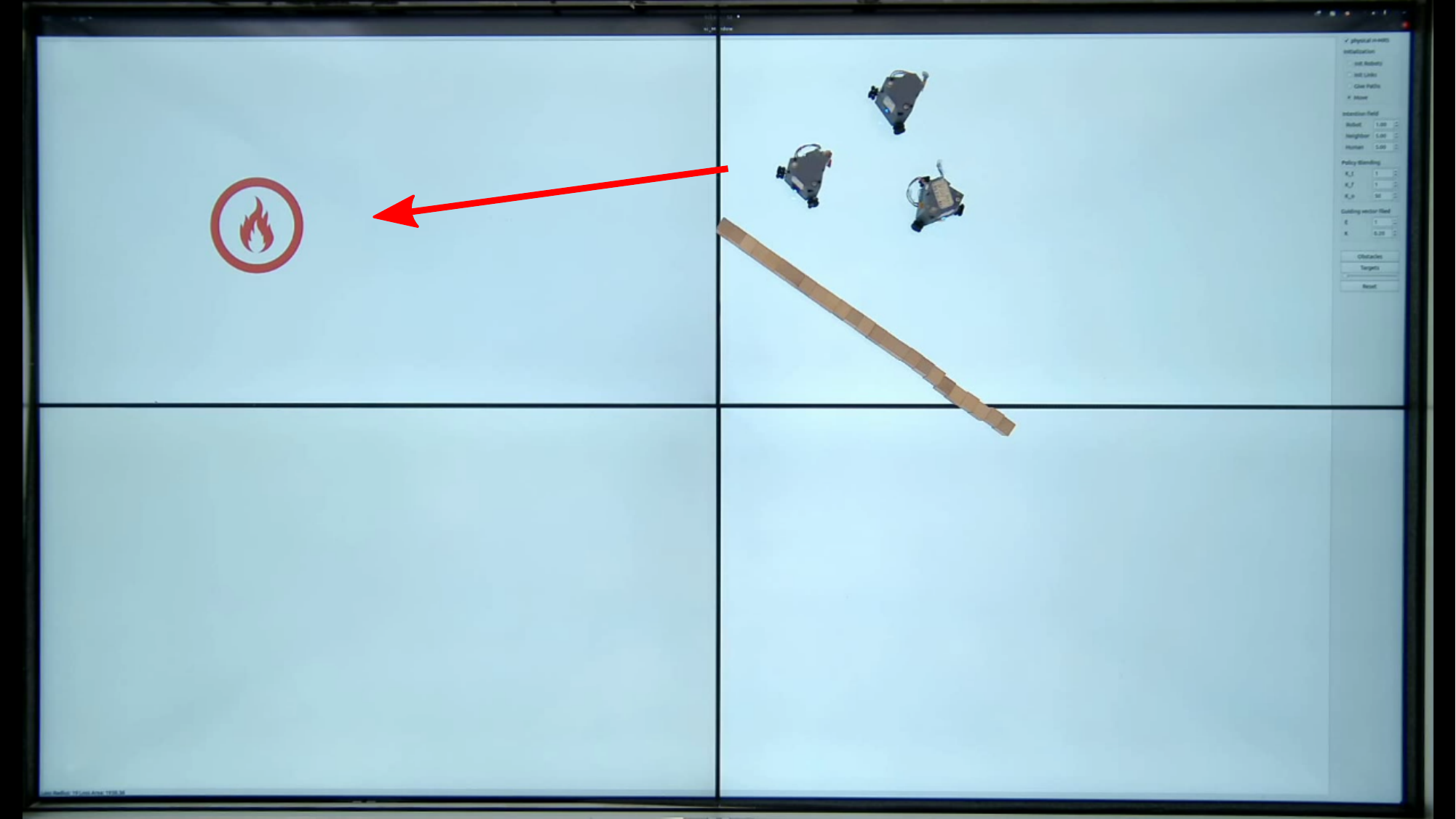}}
	\caption{The execution process with human intervention.} 
	\label{physical_result}
\end{figure}

\begin{figure}[h!]
	\includegraphics[width=\linewidth]{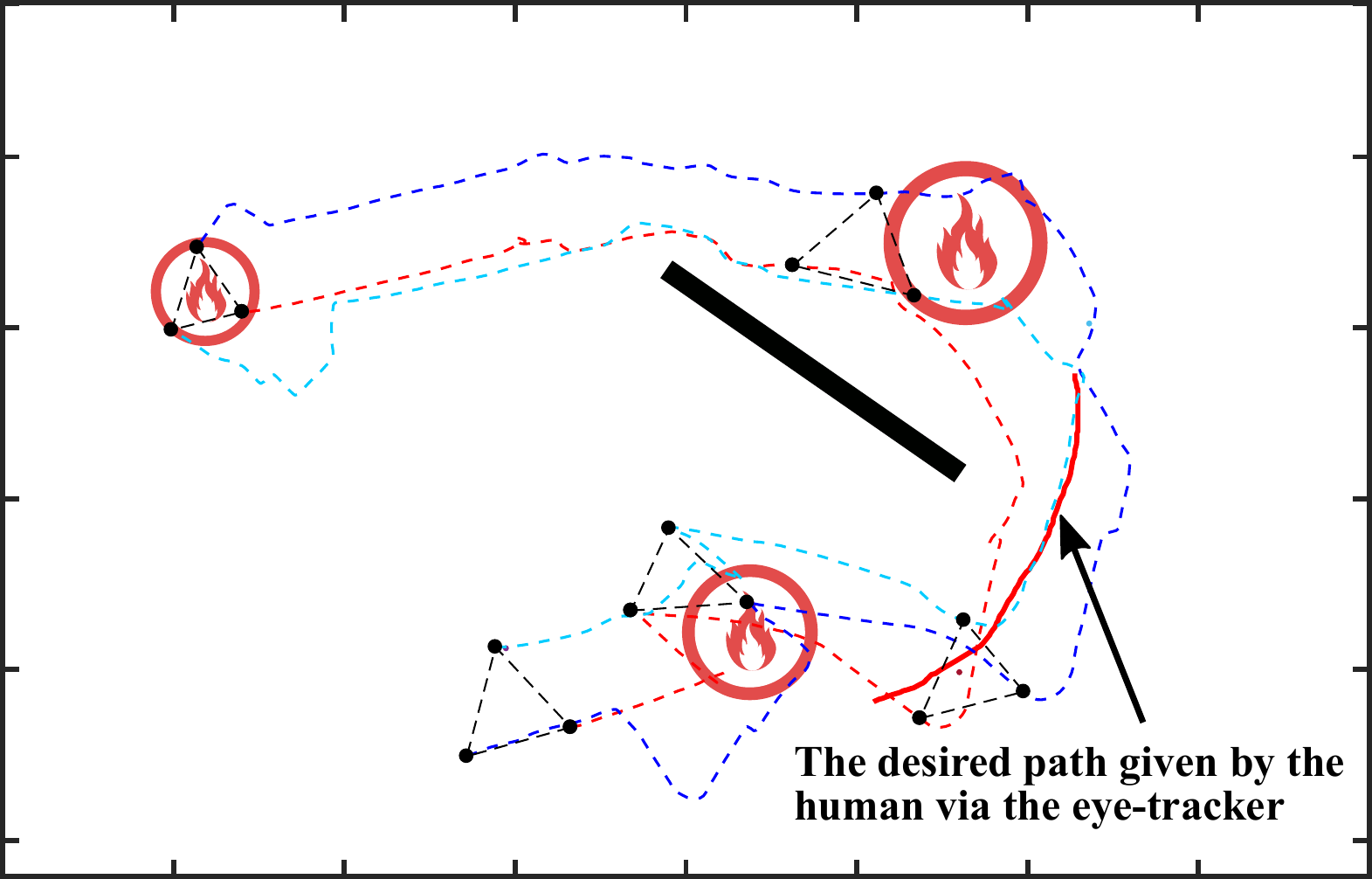}
	\caption{Physical experimental scenario for case 2.} 
	\label{physical_result_2path}
\end{figure}

\section{Conclusion}
Multi-robot teams can robustly accomplish large-scale complex missions.
However, robots often need human involvement to solve tough dilemmas. 
In this context, a shared control method integrating human intentions within a multi-robot team can improve the execution capabilities of multi-robot systems.
In this paper, a novel shared control method is proposed that allows humans to use desired paths (or certain areas) as inputs without requiring real-time human involvement in the motion control of robots, thus reducing the burden on human operators.
In our method, with human-influenced guiding vector fields utilized to describe human intentions, a hierarchical shared control framework is proposed to propagate and merge the robot and human intentions.
In addition, a novel human-multi-robot interaction system based on multi-touch screens is built for a physical demonstration to validate the proposed algorithm.
Besides, we use a variety of human-robot interactions (brain-computer interface, myoelectric wristband, eye-tracking) to intervene in the robot’s task progression. The
results in different scenarios show that our method promotes the effectiveness and efficiency for completing the tasks.


	\clearpage
	\section*{Appendix}
	\subsection{The stability analysis of the intentional field}
	\label{stability_in}
	Consider the robot as a first-order model, i.e., $\dot{\boldsymbol{x}}_i=\boldsymbol{v}_i$.
	Equation \eqref{shared_intention1} can be rewritten as follows:
	\begin{equation}
	\begin{aligned}
	\boldsymbol{v}_i^s(k+1)= & - \omega_0\boldsymbol{v}_i^s(k) +\omega_1\left(\boldsymbol{v}_i^t(k)-\boldsymbol{v}_i^s(k)\right) \\
	& +\omega_2 \sum_{j \in \mathcal{N}_i} \Phi\left(\boldsymbol{v}_j^s(k)-\boldsymbol{v}_i^s(k)\right) \\
	& +\omega_3 \Psi_i\left(\boldsymbol{v}_i^h(k)-\boldsymbol{v}_i^s(k)\right).
	\end{aligned}
	\label{shared_intention}
	\end{equation}
	For the whole robot formation, the state can be represented as $\boldsymbol{x}=[\boldsymbol{x}_1^\top,\cdots,\boldsymbol{x}_N^\top]^\top\in \mathbb{R}^{2N}$.
	Let $\Psi=\text{diag}({\Psi_i}) \otimes I_2  \in \mathbb{R}^{2N \times 2N}$, $\Phi(\boldsymbol{x})=[\phi(\boldsymbol{x}_1)^\top,\cdots,\phi(\boldsymbol{x}_N)^\top]^\top$, $\boldsymbol{v}_s=[{\boldsymbol{v}^s_1}^\top,\cdots,{\boldsymbol{v}^s_N}^\top]^\top$, $\boldsymbol{v}_t=[{\boldsymbol{v}^t_1}^\top,\cdots,{\boldsymbol{v}^t_N}^\top]^\top$, $\boldsymbol{v}_h=[{\boldsymbol{v}^h_1}^\top,\cdots,{\boldsymbol{v}^h_N}^\top]^\top$, where $\otimes$ is the Kronecker product.
	Then, \eqref{shared_intention} becomes
	\begin{equation}
	\boldsymbol{v}_s= -\omega_0 \boldsymbol{v}_s  + \omega_1 (\boldsymbol{v}_t-\boldsymbol{v}_s) - \omega_2 \widetilde{D}\Phi(\widetilde{D}^\top \boldsymbol{v}_s)+\omega_3 \Psi(\boldsymbol{v}_h-\boldsymbol{v}_s),
	\end{equation}
	where $\widetilde{D} = D \otimes I_2$, D is the incidence matrix.
	We use the Lyapunov method for stability analysis of the intention field model.
	\begin{theorem}
		The intention field model with $\boldsymbol{v}_t$ and  $\boldsymbol{v}_s$ as the input, and $\boldsymbol{v}_s$ as the state, is input-to-state stable.
		Moreover, let $\Vert\boldsymbol{v}_s\Vert_\infty = \limsup \limits_{t\to\infty} \Vert \boldsymbol{v}_s(t) \Vert _2$, $\gamma_t=\frac{1}{2}\sqrt{\frac{\omega_1}{\omega_0}}$, and $\gamma_h=\frac{1}{2}\sqrt{\frac{\omega_3}{\omega_0}}$, we can obtain 
		\begin{equation}
		\Vert \boldsymbol{v}_s \Vert_\infty \leq  \gamma _t  \Vert \boldsymbol{v}_t \Vert _\infty + \gamma _h  \Vert \boldsymbol{v}_h \Vert _\infty.
		\label{intent_model11}
		\end{equation}
	\end{theorem}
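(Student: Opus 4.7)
The plan is to interpret the rewritten relation in the Appendix as a continuous-time dynamical system $\dot{\boldsymbol{v}}_s = F(\boldsymbol{v}_s,\boldsymbol{v}_t,\boldsymbol{v}_h)$ with state $\boldsymbol{v}_s$ and inputs $(\boldsymbol{v}_t,\boldsymbol{v}_h)$, and to establish input-to-state stability via a single quadratic Lyapunov function $V(\boldsymbol{v}_s)=\tfrac{1}{2}\boldsymbol{v}_s^\top\boldsymbol{v}_s$. The first step is to differentiate $V$ along trajectories, which produces the scalar identity
\begin{equation*}
\dot V = -(\omega_0+\omega_1)\|\boldsymbol{v}_s\|^2 - \omega_2(\widetilde D^\top\boldsymbol{v}_s)^\top\Phi(\widetilde D^\top\boldsymbol{v}_s) - \omega_3\boldsymbol{v}_s^\top\Psi\boldsymbol{v}_s + \omega_1\boldsymbol{v}_s^\top\boldsymbol{v}_t + \omega_3\boldsymbol{v}_s^\top\Psi\boldsymbol{v}_h.
\end{equation*}
Here I exploit the structural observation $\boldsymbol{v}_s^\top\widetilde D\Phi(\widetilde D^\top\boldsymbol{v}_s)=(\widetilde D^\top\boldsymbol{v}_s)^\top\Phi(\widetilde D^\top\boldsymbol{v}_s)$, which isolates the dead-zone nonlinearity onto the edge variables.

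Next I would verify two nonnegativity facts that convert the internal coupling into pure dissipation. For each edge block $\boldsymbol{z}$ of $\widetilde D^\top\boldsymbol{v}_s$, the definition of $\Phi$ gives $\boldsymbol{z}^\top\phi(\boldsymbol{z})=\|\boldsymbol{z}\|\max\{\|\boldsymbol{z}\|-\epsilon,0\}\ge 0$, so the consensus term contributes nonpositively to $\dot V$. Similarly $\Psi=\mathrm{diag}(\Psi_i)\otimes I_2$ is a projection, hence $\boldsymbol{v}_s^\top\Psi\boldsymbol{v}_s\ge 0$ and $\|\Psi\|\le 1$, so this term also contributes nonpositively on the interior and is bounded above by $\|\boldsymbol{v}_s\|\|\boldsymbol{v}_h\|$ after Cauchy--Schwarz on the cross term. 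After dropping these two nonpositive pieces we obtain the clean dissipation inequality
\begin{equation*}
\dot V \le -\omega_0\|\boldsymbol{v}_s\|^2 + \omega_1\|\boldsymbol{v}_s\|\,\|\boldsymbol{v}_t\| + \omega_3\|\boldsymbol{v}_s\|\,\|\boldsymbol{v}_h\|.
\end{equation*}

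From here the ISS estimate follows by a calibrated Young inequality. Splitting the $-\omega_0\|\boldsymbol{v}_s\|^2$ term into two halves and choosing the Young constants so that one half absorbs $\omega_1\|\boldsymbol{v}_s\|\|\boldsymbol{v}_t\|$ and the other half absorbs $\omega_3\|\boldsymbol{v}_s\|\|\boldsymbol{v}_h\|$ gives $\dot V \le -\tfrac{\omega_0}{2}\|\boldsymbol{v}_s\|^2 + \tfrac{\omega_1^2}{2\omega_0}\|\boldsymbol{v}_t\|^2 + \tfrac{\omega_3^2}{2\omega_0}\|\boldsymbol{v}_h\|^2$, from which $V$ is an ISS-Lyapunov function. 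Applying the ultimate-bound lemma (e.g., Khalil, Theorem 4.19) yields $\limsup_t V \le \tfrac{\omega_1^2}{4\omega_0^2}\|\boldsymbol{v}_t\|_\infty^2 + \tfrac{\omega_3^2}{4\omega_0^2}\|\boldsymbol{v}_h\|_\infty^2$, and taking square roots together with $\sqrt{a^2+b^2}\le a+b$ for nonnegative $a,b$ delivers exactly $\|\boldsymbol{v}_s\|_\infty\le \gamma_t\|\boldsymbol{v}_t\|_\infty + \gamma_h\|\boldsymbol{v}_h\|_\infty$ with the stated $\gamma_t$ and $\gamma_h$.

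The main obstacle I foresee is the nonsmooth dead-zone $\Phi$ combined with the signed incidence structure $\widetilde D$: the Lyapunov derivative is only defined almost everywhere, so one must either appeal to a Clarke generalized gradient framework or verify that the sector property $\boldsymbol{z}^\top\Phi(\boldsymbol{z})\ge 0$ survives on the measure-zero switching set (which it does, by continuity of $\Phi$). A secondary subtlety is matching the prefactor $\tfrac{1}{2}$ in the gains exactly, which is what forces the symmetric split of $-\omega_0\|\boldsymbol{v}_s\|^2$ described above; a careless Young step would still give ISS but with a more conservative gain than the one claimed in the theorem.
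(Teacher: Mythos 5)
Your overall strategy (the quadratic Lyapunov function $V=\tfrac{1}{2}\boldsymbol{v}_s^\top\boldsymbol{v}_s$, nonnegativity of the dead-zone/incidence term, ISS via an ultimate-bound argument) is the same as the paper's, but there is a genuine gap in how you extract the stated gains, and your final claim that the computation ``delivers exactly'' $\gamma_t=\tfrac{1}{2}\sqrt{\omega_1/\omega_0}$ and $\gamma_h=\tfrac{1}{2}\sqrt{\omega_3/\omega_0}$ is false. The problem is the step where you relax $-(\omega_0+\omega_1)\|\boldsymbol{v}_s\|^2$ to $-\omega_0\|\boldsymbol{v}_s\|^2$ and drop $-\omega_3\boldsymbol{v}_s^\top\Psi\boldsymbol{v}_s$ as ``nonpositive.'' Those discarded quadratic terms are precisely what the theorem's gains rely on: the paper pairs each cross term with its own $-\|\boldsymbol{v}_s\|^2$ contribution via the per-robot inequality $\boldsymbol{v}_i^{s\top}\boldsymbol{v}_i^t-\|\boldsymbol{v}_i^s\|^2\le\tfrac{1}{4}\|\boldsymbol{v}_i^t\|^2$, which yields
\begin{equation*}
\dot V \le -\omega_0\|\boldsymbol{v}_s\|^2+\tfrac{\omega_1}{4}\|\boldsymbol{v}_t\|^2+\tfrac{\omega_3}{4}\|\boldsymbol{v}_h\|^2,
\end{equation*}
and hence $\|\boldsymbol{v}_s\|_\infty^2\le\tfrac{\omega_1}{4\omega_0}\|\boldsymbol{v}_t\|_\infty^2+\tfrac{\omega_3}{4\omega_0}\|\boldsymbol{v}_h\|_\infty^2$, i.e.\ gains proportional to $\sqrt{\omega_1/\omega_0}$. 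Once you have thrown those terms away and must absorb $\omega_1\|\boldsymbol{v}_s\|\|\boldsymbol{v}_t\|$ using a fraction of $-\omega_0\|\boldsymbol{v}_s\|^2$ alone, Young's inequality necessarily produces coefficients of the form $\omega_1^2/(c\,\omega_0)$ on $\|\boldsymbol{v}_t\|^2$, giving gains proportional to $\omega_1/\omega_0$ rather than $\sqrt{\omega_1/\omega_0}$; these do not match the theorem and neither bound dominates the other in general. (Your intermediate display is also internally inconsistent: retaining $-\tfrac{\omega_0}{2}\|\boldsymbol{v}_s\|^2$ allocates only $\tfrac{\omega_0}{4}\|\boldsymbol{v}_s\|^2$ per cross term, which forces $\tfrac{\omega_1^2}{\omega_0}\|\boldsymbol{v}_t\|^2$, not $\tfrac{\omega_1^2}{2\omega_0}\|\boldsymbol{v}_t\|^2$, and the resulting ultimate bound on $\|\boldsymbol{v}_s\|_\infty$ is $\sqrt{2}\,\tfrac{\omega_1}{\omega_0}\|\boldsymbol{v}_t\|_\infty+\sqrt{2}\,\tfrac{\omega_3}{\omega_0}\|\boldsymbol{v}_h\|_\infty$.)

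The fix is simple: do not merge $-\omega_1\|\boldsymbol{v}_s\|^2$ and $-\omega_3\boldsymbol{v}_s^\top\Psi\boldsymbol{v}_s$ into the $-\omega_0$ term; instead bound $\omega_1\boldsymbol{v}_s^\top(\boldsymbol{v}_t-\boldsymbol{v}_s)\le\tfrac{\omega_1}{4}\|\boldsymbol{v}_t\|^2$ and $\omega_3\boldsymbol{v}_s^\top\Psi(\boldsymbol{v}_h-\boldsymbol{v}_s)\le\tfrac{\omega_3}{4}\|\boldsymbol{v}_h\|^2$ directly, then conclude with $\sqrt{a^2+b^2}\le a+b$ as you intended. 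Your two structural observations (the sector property $\boldsymbol{z}^\top\Phi(\boldsymbol{z})\ge 0$ on each edge block and $\Psi$ being an orthogonal projection) are correct and are exactly what the paper uses implicitly; the nonsmoothness concern is a non-issue here since $\Phi$ as defined is continuous, so $\dot V$ is well defined along trajectories.
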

	\begin{proof}
		Let $V(\boldsymbol{v}_s)= \frac{1}{2}\boldsymbol{v}_s^\top\boldsymbol{v}_s$, from \eqref{shared_intention}, we can obtain 
		\begin{equation}
		\begin{aligned}
		\dot{V}= &-\omega_0 \Vert \boldsymbol{v}_s \Vert ^2 + \omega_1 \boldsymbol{v}_s ^\top \Psi (\boldsymbol{v}_t-\boldsymbol{v}_s) \\ 
		&- \omega_2 \boldsymbol{v}_s ^\top \widetilde{D}\Phi(\widetilde{D}^\top \boldsymbol{v}_s) + \omega_3 \boldsymbol{v}_s ^\top \Psi (\boldsymbol{v}_h-\boldsymbol{v}_s).
		\end{aligned}
		\label{li1}
		\end{equation}
		More generally, suppose robots $1, \cdots, N$, all have their own target, thus from {\small $\boldsymbol{v}^s_i {^\top}\boldsymbol{v}^t_i \leq \Vert \boldsymbol{v}^s_i \Vert ^2 + \frac{1}{4}\Vert \boldsymbol{v}^t_i \Vert^2$}, we have 
		\begin{equation}
		\boldsymbol{v}_s ^\top \Psi (\boldsymbol{v}_t-\boldsymbol{v}_s)=\sum_{i=1}^N (\boldsymbol{v}_i^s {^\top} \boldsymbol{v}^t_i-\Vert \boldsymbol{v}^s_i\Vert^2) \leq \frac{\Vert\boldsymbol{v}_t \Vert^2}{4}.
		\end{equation}
		Likewise, we have 
		\begin{equation}
		\boldsymbol{v}_s ^\top \Psi (\boldsymbol{v}_h-\boldsymbol{v}_s)=\sum_{i=1}^N (\boldsymbol{v}^s_i {^\top} \boldsymbol{v}^h_i-\Vert \boldsymbol{v}^s_i\Vert^2) \leq \frac{\Vert\boldsymbol{v}_h \Vert^2}{4}.
		\end{equation}
		Since $\boldsymbol{v}_s ^\top \widetilde{D}\Phi(\widetilde{D}^\top \boldsymbol{v}_s) \geq 0$, from \eqref{li1} we have 
		\begin{equation}
		\begin{aligned}
		\dot{V} &\leq -\omega_0 \Vert \boldsymbol{v}_s\Vert^2+\frac{1}{4}\omega_1\Vert \boldsymbol{v}_t \Vert^2+\frac{1}{4}\omega_3\Vert \boldsymbol{v}_h \Vert^2 \\ &\leq -\omega_0 \Vert \boldsymbol{v}_s\Vert^2+\frac{1}{4}(\sqrt{\omega_1}\Vert \boldsymbol{v}_t \Vert+\sqrt{\omega_3}\Vert \boldsymbol{v}_h \Vert)^2.
		\end{aligned}
		\end{equation}
		From this, we can get that the intention model is input-to-state stable when $    \Vert \boldsymbol{v}_s \Vert_\infty \leq  \frac{1}{2}\sqrt{\frac{\omega_1}{\omega_0}}  \Vert \boldsymbol{v}_t \Vert _\infty + \frac{1}{2}\sqrt{\frac{\omega_3}{\omega_0}}  \Vert \boldsymbol{v}_h \Vert _\infty.$
	\end{proof}

	\subsection{The stability analysis of the consensus network}
	\label{stability_con}
	According to the consensus theory \cite{lewis2013cooperative}, the system dynamics of the consensus network can be represented as
	\begin{equation}
	\boldsymbol{\dot{x}}=-\widetilde{L} \boldsymbol{x}+\Delta 
	\end{equation}
	\begin{equation}
	\boldsymbol{\dot{e}}=-\widetilde{D}^\top \widetilde{D} \boldsymbol{e}+\widetilde{D}^\top \Delta
	\label{error}
	\end{equation}
	where $\boldsymbol{x}=[\boldsymbol{x}_1^\top,\cdots,\boldsymbol{x}_N^\top]^\top\in \mathbb{R}^{2N}$ is the state set; $\Delta=[\Delta_1^\top,\cdots,\Delta_N^\top]$, $\Delta_i=\boldsymbol{v}_i-\boldsymbol{v}_i^f=\lambda_i(\hat{\boldsymbol{v}}_i^s-\boldsymbol{v}_i^f)$; $\widetilde{L}=L \otimes I_2$, $L$ is the Laplacian matrix; $\boldsymbol{e}=\widetilde{D}^\top \boldsymbol{x}$ is the consensus error.
	And we can easily obtain Lemma 1.
	\begin{lemma}
		Given a connected graph $\mathcal{G}(\mathcal{V},\mathcal{E})$, $\widetilde{D}^\top\widetilde{D}$ is positive definite on space $\mathrm{span}\{\widetilde{D}^\top\}$.
		Therefore, for $\forall \boldsymbol{e},\exists \delta >0$, such that $\boldsymbol{e}^\top\widetilde{D}^\top\widetilde{D}\boldsymbol{e} \geq \delta \|\boldsymbol{e}\|^2$.
	\end{lemma}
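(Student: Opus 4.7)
The plan is to decompose the claim into a qualitative statement (positive definiteness on $\mathrm{span}\{\widetilde{D}^\top\}$) and a quantitative one (the constant $\delta$), then connect both to standard spectral properties of the graph Laplacian $L = DD^\top$.

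First I would observe that $\widetilde{D}^\top \widetilde{D}$ is symmetric and positive semidefinite everywhere, since $\boldsymbol{v}^\top \widetilde{D}^\top \widetilde{D} \boldsymbol{v} = \|\widetilde{D}\boldsymbol{v}\|^2 \geq 0$, and that the quadratic form vanishes exactly on $\ker(\widetilde{D})$. By the fundamental theorem of linear algebra, the range of $\widetilde{D}^\top$ is precisely the orthogonal complement $\ker(\widetilde{D})^\perp$ in the standard inner product on $\mathbb{R}^{2N}$. Restricting a symmetric PSD form to the orthogonal complement of its kernel yields a strictly positive form, so $\widetilde{D}^\top \widetilde{D}$ is positive definite on $\mathrm{span}\{\widetilde{D}^\top\}$. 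This establishes the first sentence of the lemma.

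Next I would produce the explicit constant. Using $\widetilde{D} = D \otimes I_2$ and the Kronecker identities $(A \otimes B)^\top = A^\top \otimes B^\top$ and $(A \otimes B)(C \otimes D) = (AC) \otimes (BD)$, I get $\widetilde{D}^\top \widetilde{D} = (D^\top D) \otimes I_2$, so its spectrum is that of $D^\top D$ with every eigenvalue doubled in multiplicity. The nonzero eigenvalues of $D^\top D$ equal those of the Laplacian $L = DD^\top$, and since $\mathcal{G}$ is connected, a classical result in algebraic graph theory guarantees that $L$ has a simple zero eigenvalue with all other eigenvalues strictly positive; denote the smallest of these by $\lambda_2(L) > 0$ (the algebraic connectivity, or Fiedler value). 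The Courant--Fischer / Rayleigh-quotient characterization applied to the restriction of $\widetilde{D}^\top \widetilde{D}$ to $\ker(\widetilde{D})^\perp = \mathrm{range}(\widetilde{D}^\top)$ then gives $\boldsymbol{e}^\top \widetilde{D}^\top \widetilde{D} \boldsymbol{e} \geq \lambda_2(L) \|\boldsymbol{e}\|^2$ for every $\boldsymbol{e}$ in that subspace, so I would set $\delta := \lambda_2(L)$.

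A small interpretive point deserves mention: the universal quantifier ``for all $\boldsymbol{e}$'' in the lemma must be read as ``for all $\boldsymbol{e} \in \mathrm{span}\{\widetilde{D}^\top\}$'', since any nonzero vector in $\ker(\widetilde{D})$ trivially violates the bound. This reading is consistent with the intended use in equation \eqref{error}, where $\boldsymbol{e} = \widetilde{D}^\top \boldsymbol{x}$ automatically lies in $\mathrm{range}(\widetilde{D}^\top)$. I do not anticipate a genuine obstacle beyond this bookkeeping; the substantive content is the well-known identification between the nonzero spectra of $D^\top D$ and $DD^\top$, together with the equivalence between connectivity of $\mathcal{G}$ and simplicity of the zero eigenvalue of $L$, both of which can be cited rather than reproved.
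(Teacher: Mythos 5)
Your proof is correct. Note, however, that the paper offers no proof of this lemma at all---it is introduced with ``we can easily obtain Lemma 1''---so there is no argument of the authors' to compare against; your write-up supplies exactly the standard reasoning that the paper leaves implicit. The chain $\boldsymbol{e}^\top\widetilde{D}^\top\widetilde{D}\boldsymbol{e}=\|\widetilde{D}\boldsymbol{e}\|^2$, $\mathrm{range}(\widetilde{D}^\top)=\ker(\widetilde{D})^\perp$, the Kronecker identity $\widetilde{D}^\top\widetilde{D}=(D^\top D)\otimes I_2$, and the identification of the nonzero spectrum of the edge Laplacian $D^\top D$ with that of $L=DD^\top$ together with connectivity yields the explicit constant $\delta=\lambda_2(L)$, which is sharper than the bare existence claim the lemma asserts. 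Your interpretive remark is also well placed and worth keeping: as literally written, ``for all $\boldsymbol{e}$'' is false whenever the graph contains a cycle, since any nonzero element of $\ker(\widetilde{D})$ (the cycle space, tensored with $\mathbb{R}^2$) annihilates the quadratic form; the statement only holds on $\mathrm{range}(\widetilde{D}^\top)$, which is where the consensus error $\boldsymbol{e}=\widetilde{D}^\top\boldsymbol{x}$ actually lives in the proof of Theorem 2. One could add that connectivity is not strictly needed for the qualitative conclusion (positive definiteness on $\mathrm{range}(\widetilde{D}^\top)$ holds for any graph with at least one edge); it matters only if one insists on naming the constant as the algebraic connectivity $\lambda_2(L)$ rather than the smallest nonzero eigenvalue of $D^\top D$.
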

	
	For the designed $\lambda$ function in \eqref{lambda}, we have Lemma 2.
	\begin{lemma}
		For $\lambda(a,b)$, there exists $\bar{f}_b(a) \in \mathcal{K}$, when $b>\bar{f}_b(a)$ such that $\forall a > 0, \forall c \geq 0$,
		\begin{equation}
		\lambda(a,b)(b+c)b \leq \frac{1}{2}b^2.
		\end{equation}
	\end{lemma}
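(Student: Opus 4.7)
The plan is to substitute the explicit form of $\lambda$ from \eqref{lambda}, namely $\lambda(a,b) = k_s a / (k_s a + k_f b)$, into the target inequality and clear denominators. Since $b > 0$ and $k_s a + k_f b > 0$, the inequality $\lambda(a,b)(b+c)b \leq b^2/2$ is equivalent, after multiplying through, to the polynomial condition $k_f b^2 - k_s a b - 2 k_s a c \geq 0$, which is a quadratic inequality in $b$ with strictly positive leading coefficient.

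Next, I would extract the threshold for $b$ by applying the quadratic formula to the positive root of this polynomial in $b$. This yields an explicit $\bar{f}_b(a)$ consisting of an affine-in-$a$ term plus a square-root term involving $a$ and $c$, and the inequality holds exactly when $b$ exceeds that root. I would then check the three defining properties of a class-$\mathcal{K}$ function in $a$: continuity (composition of continuous maps), vanishing at the origin (both the linear part and the radicand are zero when $a = 0$), and strict monotonicity on $[0,\infty)$ (both summands in the numerator are strictly increasing in $a$). None of these three checks should require more than a direct inspection of the closed-form expression.

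The main obstacle is handling the universal quantifier over $c$. For any fixed $a > 0$ the positive root grows unboundedly as $c \to \infty$, so no single threshold $\bar{f}_b(a)$ can dominate for every $c \geq 0$. I would therefore interpret $c$ as a fixed design parameter supplied by the surrounding stability argument (so that $\bar{f}_b$ depends on $c$ implicitly as a parameter), in which case the algebraic reduction above furnishes the required class-$\mathcal{K}$ threshold and the proof collapses to a single invocation of the quadratic formula. With this reading, Lemma~2 delivers exactly what \eqref{error} needs: once the formation-control magnitude $b$ is large enough relative to the shared-intention magnitude $a$, the blending weight $\lambda$ becomes small enough that $\lambda(a,b)(b+c)b$ is absorbed into $b^2/2$, allowing the consensus term $\widetilde{D}^\top \widetilde{D}\boldsymbol{e}$ to dominate the perturbation $\widetilde{D}^\top \Delta$.
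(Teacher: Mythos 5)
Your proof is correct but takes a genuinely different route from the paper's. The paper never substitutes the explicit formula \eqref{lambda}: it defines $g_a(b)=\lambda(a,b)(1+c/b)$ so that the target inequality reads $g_a(b)\le 1/2$, then argues purely from the axiomatic properties of $\lambda$ (strict monotonicity in each argument and the limits in condition 4) that $g_a$ is strictly decreasing with $g_a(b)\to 0$ as $b\to\infty$ and $g_a(b)\to\infty$ as $b\to 0$, and invokes the intermediate value theorem to produce the threshold $b^*=\bar f_b(a)$. That argument is more general --- it would survive replacing \eqref{lambda} by any weighting function satisfying conditions 1)--4) --- but it only establishes \emph{existence} of $b^*$ and never verifies that $a\mapsto b^*(a)$ is continuous, vanishes at zero, and is strictly increasing, i.e.\ the class-$\mathcal{K}$ claim in the lemma statement is asserted without proof. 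Your quadratic-formula computation, by contrast, is tied to the specific $\lambda$ of \eqref{lambda} but yields the closed form $\bar f_b(a)=\bigl(k_s a+\sqrt{k_s^2a^2+8k_fk_sac}\,\bigr)/(2k_f)$, from which the class-$\mathcal{K}$ properties follow by inspection, so on this point your proof is actually more complete than the paper's. You are also right to flag the quantifier on $c$: as literally stated the lemma cannot hold uniformly in $c$, and both your reading (fix $c$ as a parameter, here $c=\mathcal{C}$ from Theorem 2) and the paper's proof (which silently carries a fixed $c$ inside $g_a$) resolve it the same way; note the paper's claim $\lim_{b\to 0}g_a(b)=\infty$ even fails for $c=0$, though the conclusion survives since the limit is then $1>1/2$.
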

	\begin{proof} 
		Let $g_a(b)=\lambda(a,b)(1+\frac{c}{b})$, then we can easily obtain that $g_a(b)$ is strictly decreasing in $b$, and $\lim_{b\rightarrow \infty}g_a(b)=0$, $\lim_{b\rightarrow 0}g_a(b)=\infty$.
		Therefore, for $\forall a>0$, there exists $b^*$ such that $g_a(b^*)=\frac{1}{2}$, and for $\forall b \geq b^*$, $g_a(b)\leq\frac{1}{2}$.
	\end{proof}
	
	\begin{theorem}
		The consensus network with $\hat{\boldsymbol{v}}_s$ as the input, and $\boldsymbol{e}$ as the state, is input-to-state stable.
		Moreover, let  $\Vert \boldsymbol{e} \Vert_\infty = \limsup \limits_{t\to\infty} \Vert \boldsymbol{e}(t) \Vert $, $\gamma_e=\frac{1}{2}\sqrt{\frac{\omega_1}{\omega_0}}$, we can obtain 
		\begin{equation}
		\Vert \boldsymbol{e} \Vert_\infty \leq  \gamma _e  \Vert \hat{\boldsymbol{v}}_s \Vert _\infty.
		\label{intent_model}
		\end{equation}
	\end{theorem}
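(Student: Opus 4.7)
The plan is to use the quadratic Lyapunov candidate $V(\boldsymbol{e}) = \tfrac{1}{2}\boldsymbol{e}^\top\boldsymbol{e}$ and analyze its derivative along the error dynamics \eqref{error}. Differentiating gives $\dot V = -\boldsymbol{e}^\top\widetilde{D}^\top\widetilde{D}\boldsymbol{e} + \boldsymbol{e}^\top\widetilde{D}^\top\Delta$. Because $\boldsymbol{e} = \widetilde{D}^\top\boldsymbol{x}\in \mathrm{span}\{\widetilde{D}^\top\}$ by construction, Lemma 1 applies on the relevant subspace and yields the dissipation bound $-\boldsymbol{e}^\top\widetilde{D}^\top\widetilde{D}\boldsymbol{e} \leq -\delta \|\boldsymbol{e}\|^2$, with $\delta>0$ inherited from the algebraic connectivity of the communication graph.

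The decisive step is to dominate the cross term $\boldsymbol{e}^\top\widetilde{D}^\top\Delta$. Unpacking $\Delta_i = \lambda_i(\hat{\boldsymbol{v}}_i^s - \boldsymbol{v}_i^f)$ and applying Cauchy--Schwarz componentwise, I would reduce the task to estimating $\sum_i \|\Delta_i\|\cdot\|\widetilde{D}\boldsymbol{e}\|$. The key is to invoke Lemma 2 with the identifications $a=\|\hat{\boldsymbol{v}}_i^s\|$, $b=\|\boldsymbol{v}_i^f\|$, and $c=\|\hat{\boldsymbol{v}}_i^s\|$, which produces $\lambda_i(\|\boldsymbol{v}_i^f\|+\|\hat{\boldsymbol{v}}_i^s\|)\|\boldsymbol{v}_i^f\| \leq \tfrac{1}{2}\|\boldsymbol{v}_i^f\|^2$ in the regime where $\|\boldsymbol{v}_i^f\|$ is large. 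Since $\boldsymbol{v}_i^f$ is built from relative neighbor displacements, a short calculation on the definition of $\mathcal{X}_i^f$ should give $\sum_i \|\boldsymbol{v}_i^f\|^2 \leq C\|\boldsymbol{e}\|^2$ for a graph-dependent constant $C$, so this "formation-pulling" contribution can be absorbed back into the $-\delta\|\boldsymbol{e}\|^2$ term. What survives is an $O(\|\hat{\boldsymbol{v}}_s\|\cdot\|\boldsymbol{e}\|)$ perturbation driven purely by the exogenous input.

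Completing the square on that residual then yields an ISS-Lyapunov inequality of the form $\dot V \leq -\alpha\|\boldsymbol{e}\|^2 + \tfrac{1}{4\alpha}\|\hat{\boldsymbol{v}}_s\|^2$ for a suitable $\alpha>0$ chosen to match the coefficient $\gamma_e$ in \eqref{intent_model}. Passing to the $\limsup$, applying a standard comparison lemma, and extracting square roots then produces the asymptotic gain bound $\|\boldsymbol{e}\|_\infty \leq \gamma_e\|\hat{\boldsymbol{v}}_s\|_\infty$. The overall template mirrors the proof of Theorem 1, with Lemma 1 playing the role that the quadratic term $-\omega_0\|\boldsymbol{v}_s\|^2$ played there.

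The main obstacle will be the middle step: Lemma 2's hypothesis $b > \bar{f}_b(a)$ is a regime condition that does not hold uniformly in $\boldsymbol{e}$, so I expect to split the state space into two regions --- one where $\|\boldsymbol{v}_i^f\|$ dominates $\|\hat{\boldsymbol{v}}_i^s\|$ (handled directly by Lemma 2) and a complementary regime in which $\|\boldsymbol{v}_i^f\| \leq K\|\hat{\boldsymbol{v}}_i^s\|$ and the trivial bound $\lambda_i\in[0,1]$ together with $\|\Delta_i\|\leq \|\hat{\boldsymbol{v}}_i^s\|+\|\boldsymbol{v}_i^f\|$ already suffices. A secondary subtlety is to make the comparison $\sum_i\|\boldsymbol{v}_i^f\|^2\leq C\|\boldsymbol{e}\|^2$ rigorous in the presence of the absolute value and normalization appearing in the definition of $\mathcal{X}_i^f$; one may need to use local Lipschitz arguments near the formation manifold rather than a global calculation.
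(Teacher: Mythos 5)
Your overall template is the paper's own: the Lyapunov candidate $V(\boldsymbol{e})=\tfrac{1}{2}\boldsymbol{e}^\top\boldsymbol{e}$, Lemma~1 supplying the dissipation $-\delta\|\boldsymbol{e}\|^2$, and Lemma~2 with exactly the two-regime split you anticipate (the case $\|\boldsymbol{v}_i^f\|\geq \bar{f}_b(\|\boldsymbol{v}_i^s\|)$ handled by Lemma~2, the complementary case bounded by the constant $(\mathcal{C}+\bar{f}_b(\|\boldsymbol{v}_i^s\|))\bar{f}_b(\|\boldsymbol{v}_i^s\|)$). Two remarks. First, the ``secondary subtlety'' you flag --- proving $\sum_i\|\boldsymbol{v}_i^f\|^2\leq C\|\boldsymbol{e}\|^2$ from the definition of $\mathcal{X}_i^f$ with its absolute value and normalization --- is a non-issue in the paper's setting: the proof works at the level of the linear consensus dynamics \eqref{error}, where $\boldsymbol{v}_f=-\widetilde{D}\boldsymbol{e}$ holds by definition, so $\|\boldsymbol{v}_f\|^2=\boldsymbol{e}^\top\widetilde{D}^\top\widetilde{D}\boldsymbol{e}$ \emph{is} the dissipation term exactly; no Lipschitz argument near the formation manifold is needed. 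Also note $c$ in Lemma~2 should be instantiated as $\|\hat{\boldsymbol{v}}_i^s\|\equiv\mathcal{C}$ and $a$ as $\|\boldsymbol{v}_i^s\|$ (the unnormalized intention, since that is the first argument of $\lambda$ in \eqref{lambda}), not both as $\|\hat{\boldsymbol{v}}_i^s\|$.

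Second, and more substantively: your final step does not go through as stated. Completing the square cannot produce $\dot V\leq-\alpha\|\boldsymbol{e}\|^2+\tfrac{1}{4\alpha}\|\hat{\boldsymbol{v}}_s\|^2$, because the residual left over from the second regime of Lemma~2 is not quadratic in the input --- it is the class-$\mathcal{K}_\infty$ function $\hat{f}_s=(\mathcal{C}+\bar{f}_b)\bar{f}_b$ evaluated at $\|\boldsymbol{v}_i^s\|$, about which nothing beyond monotonicity is known. What actually follows is $\dot V\leq-\tfrac{1}{2}\delta\|\boldsymbol{e}\|^2+N\hat{f}_s(\|\boldsymbol{v}_s\|)$, hence the \emph{nonlinear} ISS gain $\|\boldsymbol{e}\|_\infty\leq\bigl(2N\hat{f}_s(\|\boldsymbol{v}_s\|_\infty)/\delta\bigr)^{1/2}$, which is what the paper's proof derives. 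The linear bound $\gamma_e\|\hat{\boldsymbol{v}}_s\|_\infty$ with $\gamma_e=\tfrac{1}{2}\sqrt{\omega_1/\omega_0}$ in the theorem statement is not established by this route (the paper's own proof ends with the nonlinear bound too, so the mismatch is inherited from the statement rather than introduced by you), but you should not claim that your coefficient $\alpha$ can be ``chosen to match $\gamma_e$'' --- that step has no justification.
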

	\begin{proof}
		Let $V(\boldsymbol{e})=\frac{1}{2}\boldsymbol{e}^\top\boldsymbol{e}$, from \eqref{error}, we can obtain
		\begin{equation}
		\dot{V}(e)=-\boldsymbol{e}^\top \widetilde{D}^\top \widetilde{D} \boldsymbol{e}+\boldsymbol{e}^\top \widetilde{D}^\top \Delta=-\|\boldsymbol{v}_f\|^2-\boldsymbol{v}_f^\top \Delta,
		\label{li}
		\end{equation}
		where $\boldsymbol{v}_f=-\widetilde{D}\boldsymbol{e}=[{\boldsymbol{v}_1^f}^\top,\cdots,{\boldsymbol{v}_N^f}^\top]^\top$.
		
		From \eqref{final_s}, we can obtain $\hat{\boldsymbol{v}}_i^s \equiv \mathcal{C}$. Since $\Delta_i=\boldsymbol{v}_i-\boldsymbol{v}_i^f=\lambda_i(\hat{\boldsymbol{v}}_i^s-\boldsymbol{v}_i^f)$, we obtain that
		\begin{equation}
		-\boldsymbol{v}_f^\top \Delta=-\sum_{i \in \mathcal{V}} {\boldsymbol{v}_i^f}^\top \Delta_i \leq \sum_{i \in \mathcal{V}} \lambda_i (\mathcal{C}+ \| \boldsymbol{v}_i^f \| ) \| \boldsymbol{v}_i^f \|.
		\end{equation}
		
		From Lemma 2, it follows that $\forall i \in \mathcal{V}$,
		
		\begin{equation}
		-{\boldsymbol{v}_i^f}^\top \Delta_i \leq\left\{\begin{array}{ll}
		\|\boldsymbol{v}_i^f\|^2 / 2 & \text { if } \|\boldsymbol{v}_i^f \| \geq \bar{f}_b\left(\| \boldsymbol{v}_i^s \|\right) \\
		\left(\mathcal{C}+\bar{f}_b\left(\| \boldsymbol{v}_i^s \|\right) \bar{f}_b\left(\left\|\boldsymbol{v}_i^s\right\|\right)\right. & \text { if }\|\boldsymbol{v}_i^f\|<\bar{f}_b\left(\|\boldsymbol{v}_i^s\|\right)
		\end{array}\right.
		\end{equation}
		
		Let $\hat{f}_s=\left(\mathcal{C}+\bar{f}_b\right) \bar{f}_b$, then $\hat{f}_s \in \mathcal{K}_{\infty}$. Then it follows that $\forall i \in \mathcal{V},-{\boldsymbol{v}_i^f}^\top \Delta_i \leq \|\boldsymbol{v}_i^f \|^2 / 2+\hat{f}_s\left(\left\|\boldsymbol{v}_i^s\right\|\right)$. Since $\sum_{i \in \mathcal{V}} \hat{f}_s\left(\|\boldsymbol{v}_i^s \|\right) \leq$ $N \hat{f}_s(\|\boldsymbol{v}_s\|)$, from \eqref{li} and Lemma 1, we have
		\begin{equation}
		\begin{aligned}
		\dot{V}(\boldsymbol{e}) &=-\|\boldsymbol{v}_f\|^2-\boldsymbol{v}_f^\top \Delta\\ &\leq-\frac{1}{2}\|\boldsymbol{v}_f\|^2+\hat{f}_s(\|\boldsymbol{v}_s\|) \leq-\frac{1}{2} \delta \|\boldsymbol{e}\|^2+\hat{f}_s(\|\boldsymbol{v}_s\|).
		\end{aligned}
		\end{equation}
		
		Thus the system is input-to-state stable, with a bound that $\|\boldsymbol{e}\|_{\infty} \leq\left(2 N \hat{f}_s\left(\|\boldsymbol{v}_s\|_{\infty}\right) / \delta\right)^{1 / 2}$
	\end{proof}
	Combining Theorem 1 and Theorem 2, we obtain corollary:
	\begin{corollary}
		The hierarchical control system with inputs human intention $\boldsymbol{v}_h$ and robot intention $\boldsymbol{v}_t$ and state consensus error $\boldsymbol{e}$ is input-state stable and satisfies 
		\begin{equation}
		\|\boldsymbol{e}\|_{\infty} \leq\left(2 N \hat{f}_s\left(\frac{1}{2}\sqrt{\frac{\omega_1}{\omega_0}}\|\boldsymbol{v}_t\|_{\infty}+\frac{1}{2}\sqrt{\frac{\omega_3}{\omega_0}}\|\boldsymbol{v}_h\|_{\infty}\right) / \delta\right)^{1 / 2}.
		\end{equation}
	\end{corollary}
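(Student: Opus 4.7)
The plan is to treat the hierarchical controller as a \emph{cascade} of two ISS subsystems and simply compose the asymptotic gains already established in Theorems~1 and~2. Theorem~1 tells us that the upper layer, whose state is $\boldsymbol{v}_s$ and whose exogenous inputs are $\boldsymbol{v}_t,\boldsymbol{v}_h$, is ISS with the explicit linear asymptotic bound $\|\boldsymbol{v}_s\|_\infty \le \gamma_t\|\boldsymbol{v}_t\|_\infty+\gamma_h\|\boldsymbol{v}_h\|_\infty$. Theorem~2 tells us that the lower layer, whose state is the consensus error $\boldsymbol{e}$ and whose input is $\boldsymbol{v}_s$ (routed through the normalization \eqref{final_s}), is ISS with asymptotic bound $\|\boldsymbol{e}\|_\infty\le(2N\hat{f}_s(\|\boldsymbol{v}_s\|_\infty)/\delta)^{1/2}$. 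Since both subsystems are individually ISS and there is no feedback from $\boldsymbol{e}$ back into $\boldsymbol{v}_s$, the standard ISS cascade argument guarantees that the overall map $(\boldsymbol{v}_t,\boldsymbol{v}_h)\mapsto\boldsymbol{e}$ is ISS as well, so it only remains to chain the two gain inequalities.

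The key calculation is to exploit the class-$\mathcal{K}_\infty$ monotonicity of $\hat{f}_s$ that was already observed inside the proof of Theorem~2. Taking $\limsup_{t\to\infty}$ on both sides of the Theorem~1 bound and then applying the monotone function $\hat{f}_s$ yields
\[
\hat{f}_s(\|\boldsymbol{v}_s\|_\infty) \;\leq\; \hat{f}_s\!\left(\tfrac{1}{2}\sqrt{\tfrac{\omega_1}{\omega_0}}\,\|\boldsymbol{v}_t\|_\infty+\tfrac{1}{2}\sqrt{\tfrac{\omega_3}{\omega_0}}\,\|\boldsymbol{v}_h\|_\infty\right).
\]
Substituting this into the Theorem~2 bound and using monotonicity of the square root gives exactly the claimed inequality
\[
\|\boldsymbol{e}\|_\infty \leq \left(\frac{2N}{\delta}\,\hat{f}_s\!\left(\tfrac{1}{2}\sqrt{\tfrac{\omega_1}{\omega_0}}\,\|\boldsymbol{v}_t\|_\infty+\tfrac{1}{2}\sqrt{\tfrac{\omega_3}{\omega_0}}\,\|\boldsymbol{v}_h\|_\infty\right)\right)^{\!1/2}.
\]

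The main obstacle is essentially bookkeeping rather than analysis: one must verify that the $\boldsymbol{v}_s$ appearing as the ``input'' of the lower layer in Theorem~2 is the same $\boldsymbol{v}_s$ whose asymptotic norm is bounded in Theorem~1, modulo the normalization \eqref{final_s} (whose magnitude equals the constant $\mathcal{C}$, a fact already used inside Theorem~2). Once this identification is made explicit, the proof reduces to the one-line substitution above. The only subtle point worth spelling out is the justification that \emph{asymptotic} gains compose in the cascade---that the transient behaviour of the upper layer does not invalidate the $\limsup$ bound on the lower one---but this follows from the fact that ISS is preserved under serial composition without any additional small-gain hypothesis, since the inner state enters the outer system only through the $\mathcal{K}_\infty$ function $\hat{f}_s$.
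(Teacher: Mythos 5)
Your proposal is correct and follows exactly the route the paper intends: the paper gives no explicit proof beyond the phrase ``Combining Theorem 1 and Theorem 2,'' and your cascade argument---applying the monotone class-$\mathcal{K}_\infty$ function $\hat{f}_s$ to the asymptotic bound of Theorem~1 and substituting into the bound of Theorem~2---is precisely that combination, spelled out. Your added care about the normalization in \eqref{final_s} and the absence of feedback from $\boldsymbol{e}$ to $\boldsymbol{v}_s$ only makes the argument more complete than the paper's.
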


\clearpage
\bibliographystyle{IEEEtran}
\bibliography{IEEEabrv,IEEEexample}

\end{document}